\documentclass[pdflatex,sn-mathphys-num]{sn-jnl}


\usepackage{graphicx}%
\usepackage{multirow}%
\usepackage{amsmath,amssymb,amsfonts}%
\usepackage{amsthm}%
\usepackage{mathrsfs}%
\usepackage[title]{appendix}%
\usepackage{xcolor}%
\usepackage{textcomp}%
\usepackage{manyfoot}%
\usepackage{booktabs}%
\usepackage{algorithm}%
\usepackage{algorithmicx}%
\usepackage{algpseudocode}%
\usepackage{listings}%

\usepackage{silence}
\WarningFilter{caption}{Unknown document class (or package)}
\usepackage{subcaption} 
\usepackage{tikz}
\usetikzlibrary{spy}
\usepackage{amsmath}
\usepackage{amssymb}
\usepackage{bm}
\usepackage{natbib}

\usepackage{placeins}
\usepackage{afterpage}

\usepackage[algo2e,ruled,vlined,linesnumbered]{algorithm2e}

\newcommand{\NN}{\mathbb{N}}

\newcommand{\RR}{\mathbb{R}}

\newcommand{\R}{\mathbb{R}}

\newcommand{\vertL}{\:\: | \:\:}

\newcommand{\dia}{\textnormal{dia}}

\newcommand{\ST}{\mathcal{S}}
\newcommand{\RC}{\mathcal{R}}
\newcommand{\RO}{\mathcal{O}}
\newcommand{\SK}{\Sigma}
\newcommand{\SE}{\Gamma}

\newcommand{\abs}[1]{\lvert #1\rvert}


\theoremstyle{thmstyleone}%
\newtheorem{theorem}{Theorem}
\newtheorem{proposition}[theorem]{Proposition}%
\newtheorem{definition}{Definition}%

\theoremstyle{thmstyletwo}%

\theoremstyle{thmstylethree}%

\raggedbottom

\begin{document}

\title[Skeleton Sparsification and Densification Scale-Spaces]{Skeleton Sparsification and Densification Scale-Spaces}


\author[1]{\fnm{Julia } \sur{Gierke}}\email{gierke@mia.uni-saarland.de}
\author*[2]{\fnm{Pascal} \sur{Peter}}\email{peter@math.uni-sb.de}
\equalcont{These authors contributed equally to this work.}

\affil[1]{\orgdiv{Mathematical Image Analysis Group}, \orgname{Saarland University}, \orgaddress{\street{Campus E1.7}, \city{Saarbr\"ucken}, \postcode{66041},  \country{Germany}}}

\affil[2]{\orgdiv{Department of Mathematics and Computer Science}, \orgname{Saarland University}, \orgaddress{\street{Campus E2.4}, \city{Saarbr\"ucken}, \postcode{66041}, \country{Germany}}}


\abstract{
	The Hamilton-Jacobi skeleton, also known as the medial axis, is a powerful shape descriptor that represents binary objects in terms of the centres of maximal inscribed disks. Despite its broad applicability, the medial axis suffers from sensitivity to noise: Minor boundary variations can lead to disproportionately large and undesirable expansions of the skeleton. Classical pruning methods mitigate this shortcoming by systematically removing extraneous skeletal branches. This sequential simplification of skeletons resembles the principle of sparsification scale-spaces that embed images into a family of reconstructions from increasingly sparse pixel representations.
	
	We combine both worlds by introducing skeletonisation scale-spaces: They leverage sparsification of the medial axis to achieve hierarchical simplification of shapes. Unlike conventional pruning, our framework inherently satisfies key scale-space properties such as hierarchical architecture, controllable simplification, and equivariance to geometric transformations. We provide a rigorous theoretical foundation in both continuous and discrete formulations and extend the concept further with densification. By growing the skeleton successively instead of shrinking it, we allow inverse progression from coarse to fine scales. Densification scale-spaces can even reach beyond the original skeleton to produce overcomplete shape representations with relevancy for practical applications. 
	
	Through proof-of-concept experiments, we demonstrate the effectiveness of our framework for practical tasks including robust skeletonisation, shape compression, and stiffness enhancement for additive manufacturing.
	}

\keywords{skeleton, medial axis, scale-space, sparsification, densification}



\maketitle

\section{Introduction}

Classical scale-spaces embed images into a family of hierarchically simplified 
versions of the original. Such image evolutions are typically based on partial 
differential equations (PDEs) \cite{AGLM93,Ii62,Li11,SchW98,We97} which remove 
information by smoothing, thus simplifying the image. 
\citet{CPW19} have introduced sparsification as an alternative: They create 
coarser scales by explicitly removing image pixels. At each scale, they 
interpolate the missing image parts to create a sequence of complete images. On 
one hand, these scale-spaces share many important properties with classical 
scale-spaces. On the other hand, they also open up new applications for 
scale-space theory such as image compression. 

In the field of shape analysis~\cite{SP08}, notions of hierarchical 
simplification that resemble sparsification are well-known in a different 
context. Shapes can be equivalently described by the medial axis introduced by 
\citet{Bl67} as a symmetry-based, central shape descriptor intended to mimic 
biological sensory processes. Due to its appearance as a thin set of connected 
lines and arcs centred in the shape, the medial axis is also referred to as the 
skeleton. Unfortunately, it is highly sensitive to perturbations on the shape 
boundary, which lead to unwanted branches in the skeleton. Therefore, so-called 
pruning techniques \cite{BLL07,ML12,Og94,SB98,SS16,TH02} have been proposed. 
They sequentially remove parts of the skeleton. The similarity to the 
sparsification processes of \citet{CPW19} suggests that there might be 
hitherto unexplored connections to scale-space theory.

This is surprising, since shape analysis has been known to be closely connected 
to scale-space theory. Significant research efforts have been dedicated to the 
exploration of morphological scale-spaces   
\cite{AGLM93,BM92,CS96,KS96,ST93,BS94}. Here, simplifying shape evolutions are 
derived from the basic erosion and dilation operators \cite{So99a}. Thus, in 
contrast to the skeleton view point, they operate on the shape outline instead 
of a central shape descriptor. While multi-scale ideas have been previously 
connected to the medial axis transform 
\cite{PSSDZ03,SBS16,SBS17,TK12,PBCFM94,PEFM98}, these focus on counteracting 
noise or on generalisations of the shape descriptor. In contrast, we aim to 
build a new scale-space theory on the combined concepts of skeletonisation and 
sparsification.

\subsection{Our Contribution}

We propose a scale-space framework that combines the ideas of image 
sparsification \cite{CPW19} and multi-scale shape analysis: Our \emph{shape 
sparsification} transfers classical scale-space properties \cite{AGLM93} such 
as hierarchical architecture, quantifiable simplification, and equivariances to 
the medial axis setting \cite{Bl67}. To this end, we consider sparsified 
skeletons instead of sparsified images and derive theoretical guarantees that 
apply to shape descriptors and their associated binary images. Furthermore, we 
complement the sparsification idea by a densification counterpart: By extending 
the original 
skeleton, we can reach finer skeleton scales that go beyond the initial scale 
corresponding to the original shape. With proof-of-concept examples, we 
illustrate how our theory supports a variety of 
different practical applications ranging from compression over skeleton 
robustification to pre-processing for 3-D printing. 

Beyond a more rigorous discussion of theoretical background and a more detailed review of related work, we extend our previous conference contribution~\cite{GP25} on skeletonisation scale-spaces in the following ways:
\begin{enumerate}
	\item We propose a fully continuous scale-space theory and discuss differences between continuous, discrete, and semi-discrete skeletonisation scale-spaces in more detail.
	\item With extended experiments for compression and branch pruning, we provide more in-depth quantitative results for potential practical applications.
	\item Densification adds a whole new paradigm to our framework. A 
	generalised bottom-up scale-space can not only grow a skeleton from a 
	single point, but also allows to add finer scales beyond the original 
	skeleton.
	\item We illustrate the usefulness of densification scale-spaces with a practical application for stiffness enhancement~\cite{BBYBP20}.
\end{enumerate}

\subsection{Related Work}

Research on the medial axis transform and scale-space theory provide the 
foundation for our work. We rely on classical definitions and fundamental 
properties from both fields. In the following we provide a short overview of 
the literature on skeleton pruning and thinning, as well as its image 
counter-part of sparsification and densification. These two fields inspire 
concrete implementations and practical applications of our scale-space 
framework.

\subsubsection{Medial Axis Transform}
\label{sec:ma}

As noted by \citet{hajlasz2020old}, the underlying concepts of the medial axis 
were already studied by \citet{erdos1946hausdorff} before the term itself was 
coined. \citet{Bl67} rediscovered the medial axis as an alternative to 
Euclidean geometry, which he deemed too specific, and topology, which he saw as 
too general. The idea of a central shape descriptor was motivated by the 
so-called grass-fire analogy, where prairie fires spread from the boundaries. 
The skeleton is formed by shock locations where these propagating fronts meet 
inside of the object. From these points, the object can be reconstructed 
\cite{GK03b}. The evolution of the propagating front in the grass-fire model is closely
related to the Eikonal equation, which can be solved by fast marching methods \cite{Se96} or in terms of viscosity solutions~\cite{RT92}. However, solving the Eikonal equation alone does not suffice to compute
the skeleton since the locations of the shocks need to be explicitly identified \cite{siddiqi:2002}.

Relations between the shape boundary and the medial axis have 
also been explored by \citet{KO05} as well as \citet{Da05}.
The medial axis is a subset of the more general symmetry 
sets~\cite{BGG85}, which are closely related, but not part of our theoretical 
framework. 
Therefore, we refer to \cite{PSSDZ03,SBS17,TDSAT16} for a more detailed review 
of this literature. 
For our purpose, classification of skeleton points into categories like end or branching points is crucial for our concrete task-specific scale-spaces. Here, we rely on the work of \citet{GK03}. 

Our work is not the first to connect skeletons to multi-scale ideas. In 
Section~\ref{sec:rel_pruning} we discuss pruning and thinning algorithms 
separately as prime examples that come closest to our own notion of skeleton 
sparsification. Additionally, \citet{PSSDZ03} and \citet{SBS17} also provide an 
overview 
of more broadly related methods that use PDEs for shock computation 
\cite{kimia1995} or use the concept of cores \cite{PEFM98}. The latter uses the 
notion of medialness, a relaxation of the strict binary conditions of the 
medial axis by observing evidence for points being approximately centred in a 
shape in a disk shaped neighbourhood. The radius of this neighbourhood defines 
a scale-parameter and cores are defined via traces in such a scale-space. Like 
similar concepts such as the multi-scale medial axis of \citet{PBCFM94} or the 
learning-based approach of \citet{TK12}, the focus differs from our own work: 
We develop a scale-space theory by explicitly applying sparsification 
techniques to the skeleton.

The medial axis transform has a wide range of practical applications that use 
either 2-D or 3-D \cite{TDSAT16} skeletons. Classically, the medial axis has 
been used 
in computer vision for tasks like object recognition and shape matching 
\cite{opiyo2021}, but there are also applications in physics 
simulations~\cite{li2023}, compression \cite{Mu20}, CAD modelling 
\cite{mayer2023computational} or 3-D printing \cite{BBYBP20}.

\subsubsection{Scale-Space Theory}

Classical scale-spaces as introduced by \citet{Ii62} rely on Gaussian blur to gradually remove image features with increasing scale. Such evolutions can be expressed by a linear partial differential equation (PDE).
This theory has been studied in great detail \cite{AGLM93,Ii62,Wi83,Li94,SNFJ96,Fl13} and has been extended to nonlinear \cite{SchW98,We97} or pseudodifferential evolutions \cite{DFGH04,SW16,FS01,BDW05}. 
Beyond linear and nonlinear diffusion processes, variational models have been studied in a scale-space context as well. The total variation (TV) flow is a particularly well-known alternative nonlinear evolution that has been studied extensively \cite{ABCM98,BCN02}. More general variational approaches to image and shape evolution have been considered for instance in shape space \cite{wirth2010variational}.
While we do not rely on PDEs or variational methods in this work and restrict ourselves to operations on skeletons, the core principles of classical scale-space theory are highly relevant. We build our theory on the architectural, invariance, and information reduction concepts introduced by \citet{AGLM93}.

Regarding our goal of constructing hierarchical shape evolutions, the class of morphological scale-spaces \cite{AGLM93,BM92,BS94,CS96,KS96,ST93} is closer to our on work than the aforementioned Gaussian category. The basic building blocks of morphological scale-spaces are dilation and erosion~\cite{So99a} operations. In a binary image, these supremum and infimum operations affect the boundary of the shape and thus naturally lead to boundary evolutions. Our framework shares similarities with some of these scale-spaces, for instance the hierarchical shrinking of objects for increasingly coarser scales. However, we define our evolution entirely on the skeleton as an equivalent shape descriptor instead of modifying the boundary.

\subsubsection{Skeleton Pruning and Thinning}
\label{sec:rel_pruning}

The sequential removal of skeleton parts has been studied for two major purposes: \emph{Thinning} addresses the fact that discrete skeletons obtained by practical methods of computation do not necessarily have the desired one-pixel width. \emph{Pruning} approaches focus instead of removing spurious skeleton branches that arise from noisy shape boundaries. Both can be used in tandem, typically thinning the skeleton first and applying pruning afterwards. Some algorithms attempt to address both goals simultaneously by integrating thinning and pruning criteria into a single framework \cite{zhu1994,palagyi2009,pudney1998,PB12}.

\textbf{Thinning:} At first glance, thinning methods 
\cite{zhu1994,palagyi2009,pudney1998,malandain1998,PB12} seem closely related 
to our sparsification since they remove individual pixels step-by-step. 
However, they are a method of discrete skeleton computation. Thus, they 
successively remove points from the original shape based on some criterion of 
medialness. This ensures that the final skeleton is thin (one pixel width). In 
contrast, we remove points from the already computed skeleton. However, for the 
computation of our initial skeleton in our experiments on discrete data, we use 
maximal disk thinning \cite{PB12}. It is a homotopic thinning method that 
preserves the connectivity of the initial shape and removes shape pixels 
ordered by their distance from the boundary. Skeleton end points define a 
stopping criterion for the algorithm and rely on the maximal disk 
identification method of \citet{remy2005}. 

\textbf{Pruning:} In contrast to thinning, branch pruning methods~\cite{BLL07,ML12,Og94,SB98,SS16,TH02} are not limited to the discrete setting. Small bumps or kinks in the object boundary can have a large impact on the skeleton, adding substantial branches both in the continuous and the discrete setting. Since pruning approaches remove parts of the skeleton, they are more closely related to our sparsification scale-spaces. We generalise the removal of skeleton points, but also show that classical branch pruning is one of the practical applications that is covered by our scale-space theory Section~\ref{sec:applications}. In particular, our experiments are inspired by the hierarchic branch pruning of Ogniewicz~\cite{Og94}.

\subsubsection{Sparsification and Densification}
\label{sec:rel_opt}

Our scale-space framework is inspired by two previous publications that successively remove image content to define a family of simplified images. \citet{CPW19} use spatial sparsification: They remove image pixels to transition from one scale to the next. These techniques originate from so-called spatial optimisation problems in image compression and sparse data representation \cite{MHWT12,HMHW17,DDI06}. Here, a set of optimal known data points has to be chosen for image reconstruction with interpolation methods such as PDE-based inpainting. Similar concepts have also been used for a sparsification of the co-domain. Quantisation scale-spaces~\cite{Pe21} reduce the number of different grey values to construct a multi-scale family of images. 

The opposite notion of densification \cite{CW21,DAW21,KBPW18} is well-known in spatial optimisation as well. Such approaches typically start with an empty image and greedily add pixels that are most important for the image reconstruction. Compared to sparsification, these approaches are often more robust to noise, since the initially chosen pixels have a global impact instead of the highly localised influence of individual pixels in a dense known data set. For our skeletonisation scale-spaces, we leverage both of the aforementioned paradigms.

\subsection{Organisation}
Section~\ref{sec:skeletonisation} provides the theoretical background required for our new scale-spaces in Section~\ref{sec:skelscale}, which we adapt to specific applications in Section~\ref{sec:applications}. We conclude with a discussion and outlook on future work in Section~\ref{sec:conclusion}.

\section{Review: Skeletonisation}
\label{sec:skeletonisation}

\begin{figure}[t]
	\small
	\tabcolsep2pt
	\begin{center}
		\begin{tabular}{cccc}
			\textbf{(a) apple-8} & \textbf{(b) distance map} & \textbf{(c) overlap} & \textbf{(d) apple-9}\\
			\includegraphics[width=0.24\textwidth]{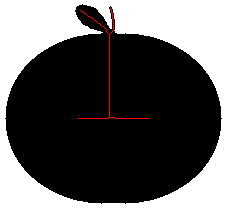} &
			\includegraphics[width=0.24\textwidth]{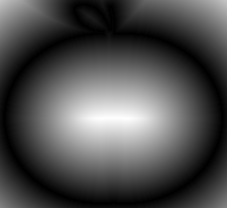} &
			\includegraphics[width=0.24\textwidth]{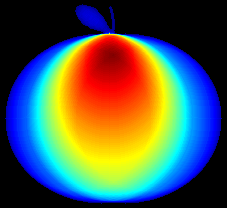} &
			\includegraphics[width=0.24\textwidth]{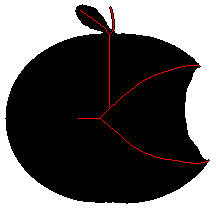} \\
		\end{tabular}
	\end{center}
	\caption{\textbf{Skeleton examples} for images apple-8 and apple-9 of the CE-Shape-1 database \cite{latecki2000}. In \textbf{(a)}, the \textbf{discrete skeleton} of the black shape is marked in red. The corresponding \textbf{distance map (b)} indicates large distances to the boundary by bright grey values. The skeleton corresponds to ridges of the distance map inside of the object.  The false colour representation of the \textbf{overlap (c)} illustrates how many skeleton points reconstruct the same object point: blue indicates the lowest amount of reconstruction overlap and red the highest. (a) and (d) together show the impact of \textbf{boundary perturbations} on the skeleton. The leaf and stem at the top of the apple create large additional branches by adding bumps to the boundary of the elliptic shape, while apple-9 introduces new cavities to the boundary that also create additional skeleton branches. \label{fig:apple}}
\end{figure}

For our shape sparsification scale-spaces, we consider binary images that arise for instance from segmentation or object detection algorithms. They separate the image into the background represented by value $0$ and the object marked by value $1$. In the continuous setting with an image domain $\Omega \subset \R^2$, we write images as functions $f: \Omega \rightarrow \{0,1\}$. We denote the set of object points by $O=\{ \bm x \in \Omega \, \vertL \, f(\bm x) = 1\}$. Thus, the binary image $f$ can be interpreted as the indicator function of the object set $O$, i.e. $f=\chi_O$. 

In particular, we assume that $O$ is bounded and closed. Thus the set is compact and the boundary $\partial O$ is included in $O$. While we can describe the object either by the set $O$ or its boundary $\partial O$, we require an alternative shape descriptor for our scale-space framework. 

\newpage
The medial axis transform (MAT) of \citet{Bl67} is the foundation on which we build our shape sparsification scale-spaces. In order to define it, we require a distance $d(\bm x, \bm y)$ for $\bm x, \bm y \in \Omega$, which we assume to be Euclidean. The MAT is a symmetric shape descriptor with respect to the boundary. Therefore, we first define the distance map, which maps each spatial location in the image domain to its distance to the object boundary $\partial O$. 

\begin{definition}[Distance Map $D$]\label{def:dm}
	The \textit{distance map} denotes the minimum distance to the boundary $\partial O$ according to
	\begin{equation}
		D: O \rightarrow \RR, \quad \bm x \mapsto \min\limits_{\bm y \in \partial O} d(\bm x,\bm y)
	\end{equation}
\end{definition}

The distance map values for each location $\bm x \in \Omega$ coincide with the radii of the largest disks that fit into the object if centred at $\bm x$.

\begin{definition}[Inscribed Disk $B_{D(\bm x)}$]\label{def:maxdisk}
	We denote the largest disk with centre $\bm x$ that is fully contained in $O$ by
	\begin{equation}
		B_{D(\bm x)}(\bm x):=\{\bm y \in \Omega \,  \vertL \, d(\bm x, \bm y) \leq D(\bm x) \}
	\end{equation}
\end{definition}

These disks are useful for defining the medial axis $\SK \subset O$. It contains all points that are centred in the object $O$, which means that they have equal distance from at least two boundary points. Thus, all medial axis points $\bm x$ are centres of inscribed disks that touch the image boundary at least at two points. This corresponds exactly to the subset of largest inscribed disks which are not contained in any other inscribed disk with a different centre $\bm y$. This allows us to define the medial axis or skeleton.

\begin{definition}[Skeleton $\SK$]\label{def:skel}
	The \textit{skeleton} or \textit{medial axis (MA)} $\SK$ is defined as the 
	set of centres of inscribed disks of maximal radius \cite{pfaltz1967}:
	\begin{equation*}   
		\Sigma := \{ \bm x \in O \, \vertL \, \forall \bm y \in O \setminus \{x\}: B_{D(\bm x)}(\bm x) \not\subset B_{D(\bm y)}(\bm y) \} \, .
	\end{equation*}
\end{definition}

In case the object $O$ is not connected, each component of $O$ will have its own separate skeleton. Note that the skeleton definition also does not require smoothness assumptions on the boundary of $O$.
In our scale-space framework, we will not only consider a single static object and corresponding skeleton, but whole families of these pairs. Therefore we explicitly introduce notations for mapping images to skeletons and vice versa.

\begin{definition}[Skeleton, Reconstruction, and Object Transform]\label{def:transforms}\phantom{}\\[-1.5em]
	\begin{itemize}
		\item The \textit{skeleton transform} $\ST(f)=\Sigma$ maps the image $f$ to its skeleton $\SK$ as defined in Def.~\ref{def:skel}.
		\item The \textit{object transform} $\RO (\SK)$ maps a skeleton back to the corresponding object according to
		\begin{equation}
			\RO (\SK) := \bigcup_{\bm x \in \SK}  B_{D(\bm x)}(\bm x) \, .
		\end{equation}
		\item The \textit{reconstruction transform} $\RC(\SK)$ maps the reconstructed skeleton back to a binary image $\RC(\SK): \Omega \rightarrow \{0,1\}$ with
		\begin{equation}
			\RC (\SK)(\bm x) := \begin{cases}
				1  &  \text{for } \bm x \in \RO (\SK) \\
				0  &  \text{else}
			\end{cases}\, .
		\end{equation}
	\end{itemize}

\end{definition}

According to the classical definitions of \citet{Bl67} and \citet{pfaltz1967}, all three transforms are well-posed. However, in the discrete setting, uniqueness cannot always be guaranteed. We discuss these aspects in the following sections~\ref{sec:skelprop} and \ref{sec:skeldisc} and consider the respective impact on our scale-space frameworks in Section~\ref{sec:skelscale}. 

Note that the transforms also imply that we can interpret operations on the skeleton in terms
	of their effect on the object and vice versa.
	Removing or inserting skeleton points modifies the set $\Sigma$ and
	therefore changes the associated object $O=\mathcal{O}(\Sigma)$.
	Conversely, modifying the object and recomputing its skeleton via the
	skeleton transform $\mathcal{S}$ leads to a modified skeleton structure.
	Thus both representations are coupled through the transforms
	$\mathcal{S}$ and $\mathcal{O}$. Operations performed in one domain
	induce corresponding changes in the other.

Typically, a single skeleton point $\bm x$ carries more information than an object point since it represents 
the whole section of the shape covered by the corresponding disk $B_{D(\bm x)}$. However, its influence on the overall shape on removal also depends heavily on the rest of the skeleton. We address in more detail with the unique impact criterion in Definition~\ref{def:impact}. 
Conversely, a very small impact on the overall shape, e.g. removing single pixels at the boundary, can have a significant impact on the skeleton as illustrated in Fig.~\ref{fig:apple}(a) and (d).

\subsection{Central Skeleton Properties}
\label{sec:skelprop}

The following four central properties of skeletons are all relevant for our scale-space considerations. All of them can be already found directly or indirectly in the original publication of \citet{Bl67}, even though formal proofs were provided later in some cases.

\medskip
\noindent
{\bf Property S1: (Equivalent Shape Descriptor)\phantomsection\label{prop:S1}}\\[1mm]
In the continuous setting, the skeleton $\SK$, together with the radii of the maximal inscribed disks provides an equivalent representation of the object, that is $\RO (\SK) = O$. Consequentially, in this setting, skeleton and reconstruction transforms are truly inverse transforms with $\RC(\ST(f))) = f$ \cite{pfaltz1967}. Furthermore, $\RO(X)$ defines a continuous mapping of any set $X$. This implies that  if $X$ is compact, then the reconstruction $\RO(X)$ is compact as well. This is relevant when we build multiple reconstructions on sparsified skeletons $X \subset \SK$.

\medskip
\noindent
{\bf Property S2: (Thin Set)\phantomsection\label{prop:S2}}\\[1mm]
The medial axis of any closed set consists of thin curves. Mathematically speaking, it has a Hausdorff measure of zero \cite{erdos1946hausdorff,hajlasz2020old}: $\mathcal{H}(\SK)=0$. 

\medskip
\noindent
{\bf Property S3: (Homotopy)\phantomsection\label{prop:S3}}\\[1mm]
The skeleton transform preserves the connectivity of the original shape \cite{lieutier2003any,GK03}. 

\medskip
\noindent
{\bf Property S4: (Equivariance under Euclidean Motion)\phantomsection\label{prop:S4}}\\[1mm]
Consider Euclidean transformations $\mathcal{T}$ that consist of translation, rotation, reflection, or uniform scaling. Then, the skeleton transform $\ST$ commutes with the Euclidean transformation: $\ST(\mathcal{T}(f)) = \mathcal{T}(\ST(f))$.

\subsection{Discrete Setting}
\label{sec:skeldisc}

In Section~\ref{sec:skelscale} we discuss not only continuous scale-spaces, but also their discrete or semi-discrete variants. In a \emph{spatially discrete setting}, we transition from the continuous domain $\Omega$ of image coordinates to an image with a finite resolution  $n_x \times n_y$. In this case, we use a vectorial notation $\bm f \in \{0,1\}^{n}$ where the indices  $\Omega_D=\{1,...,n\}$ correspond to a row-by-row sequence of the $n=n_x \cdot n_y$ pixels. We transfer Definitions \ref{def:dm} and \ref{def:maxdisk} to the discrete setting by replacing $\Omega$ by $\Omega_D$ and using the Euclidean distance between pixel centres. Then, the skeleton from Definition~\ref{def:skel} and the transforms from Definition~\ref{def:transforms} carry over as well. However, not all of the properties are preserved. 

On a discrete pixel grid, neither the exact positions of the medial axis points nor the shape of disks can be reproduced exactly. Thus, enforcing a thin skeleton in the sense of property \hyperref[prop:S2]{S2} with one pixel width might also imply that not all points of the original shape are covered by discrete disks. Therefore, reconstructions according to Definition~\ref{def:transforms} do not necessarily reconstruct the full object, i.e. the case  $\RO(\SK) \neq O$ and $R(S(f)) \neq f$ can occur.

The equivariance Property~ \hyperref[prop:S2]{S4} still holds for on-grid translations, reflections, and rotations in $90^\circ$ degree steps. However, rescaling operations and rotations by other angles affect the discrete boundary of the object. For these cases there is no guarantee that computing the skeleton from the transformed shape is identical to transforming the discrete skeleton. This may even vary depending on the algorithm used to compute the discrete skeleton. The choice of this algorithm also affects Property~\hyperref[prop:S2]{S4}. In our work, we always use homotopy preserving skeleton computations, namely the maximal disk thinning algorithm \cite{PB12}. For such homotopy preservation, the following definitions are vital:

\begin{definition}[Endpoints, Branching Points, Simple Points, and Arcs]\label{def:points}
Let a 4-neighbourhood contain left, right, upper, and lower adjacent pixels, while an 8-neighbourhood additionally contains the diagonally adjacent pixels.	
	\begin{itemize}
		\small
		\item An \textbf{endpoint} $\bm x$ of $\SK$ has either: 1.) zero or one skeleton points in their 8-neigh\-bour\-hood; 2.) two skeleton points in its 8-neighbourhood which are adjacent to one another; 3.) three skeleton points in its 8-neighbourhood that are either all above, below, left, or right of $\bm x$. We denote the set of endpoints by $E(\SK)$.
		\item A \textbf{branching point} of $\SK$ has at least three skeleton points in its 8-neigh\-bour\-hood which are not horizontally or vertically adjacent. We denote the set of branching points by $B(\SK)$.
		\item A \textbf{simple point} is a skeleton point which is neither an endpoint nor a branching point. We write $S(\SK) := \SK \setminus (E(\SK) \cup B(\SK))$.
		\item We define $\mathcal{A}(\SK)$ as the set of all arcs/branches. Each of its elements $A = \{a_1,...,a_k\} \subset \SK$ fulfils two properties: 1.) The arc is \textit{connected}, i.e. for each $i$ the points $a_i$ and $a_{i+1}$ are 8-neighbours. 2.) Exactly $a_1$ and $a_k$ are \textit{end}- or \textit{branching points}, the rest of the $a_i$ are simple. 
	\end{itemize}
\end{definition}

In some cases the skeleton may consist solely of closed loops without
endpoints or branching points, for instance for shapes with holes such
as an annulus. In this case we have $E(\Sigma)=\emptyset$ and
$B(\Sigma)=\emptyset$, so Definition~\ref{def:points} does not yield any
arcs. We treat such special cases appropriately in our framework (e.g. in Algorithm \ref{alg:skelshape}).

These definitions are also relevant for practical applications such as shape matching. Moreover, we require them for our task-specific scale-spaces and quality measures in Section~\ref{sec:applications}.

\section{Skeletonisation Scale-Spaces}
\label{sec:skelscale}

In the following, we transfer the concept of sparsification scale-spaces  to the medial axis and propose a new class of scale-spaces. Intentionally, we keep our new class of skeletonisation scale-spaces very general. In particular, we keep the amount of requirements for our frameworks very low. As a drawback of this approach, in this section we are only able to provide generic simplification guarantees. This is well worth the cost however, since we also enjoy the liberty of designing sparsification or densification paths with very small amounts of restrictions. In Section~\ref{sec:applications}, we make use of these opportunities with a diverse set of application-specific scale-spaces that illustrate how we can equip our foundational frameworks with additional requirements that yield meaningful simplification guarantees.

For sparsification scale-spaces, \citet{CPW19} have so far only provided a fully discrete theory where both the spatial domain of image pixels and the scale are discrete. In contrast, we cover the full spectrum of continuous, discrete, and semi-discrete theory. We discuss the fully continuous and discrete setting in detail. To avoid redundancies, we treat space-continuous and scale-continuous semi-discrete scale-spaces jointly in Section~\ref{sec:semi}.

\subsection{Fully Continuous Skeleton Sparsification Scale-Spaces}

We construct a scale-space by sequentially removing skeleton points from the initial skeleton $\SK(0):=\ST(f)$ obtained from the binary image $f: \Omega \rightarrow \{0,1\}$ containing the discrete object $O=\{ \bm x \in \Omega \, \vertL \, f(\bm x) = 1\}$. With the artificial time parameter $t>0$ we represent the evolution of the skeleton $\SK(t)$ starting with initial time $t=0$. For increasing $t$, we transition to coarser scales.

\begin{definition}[Skeleton Evolution]\label{def:cont_evo}
	Given the initial skeleton $\SK(0)$, a \textit{continuous sparsification path} $\SK: [0,\infty) \rightarrow \mathcal{P}(\SK(0))$ is a function that maps any positive time $t$ to an element of the power set $\mathcal{P}(\SK(0))$ of the initial skeleton. This powerset contains all possible subsets of the skeleton that can result from subsequent removal of skeleton points.
	The evolution needs to fulfil the following requirements:
		\begin{enumerate} 
		\small
		\item \textbf{CR1 (nested skeletons):\phantomsection\label{prop:CR1}} The skeletons $\SK(t)$ are nested over time, i.e. $\SK(t_2) \subset \SK(t_1)$ for $t_2 > t_1$.
		\item \textbf{CR2 (enveloping disk):\phantomsection\label{prop:CR2}} There exists an enveloping closed disk $D(t)$ with linearly decreasing radius $r(t):= \max\{-t \cdot s + r_0, 0\}$  with $r_0 > 0$ and $s > 0$, and centre $c_D \notin \SK(0)$ such that at any time $t>0$, the skeleton $\SK(t)$ is included in the enveloping disk: $\SK(t) \subset D(t)$. 
		\end{enumerate}
\end{definition}

Since we want our scale-space framework to cover a broad range of relevant 
evolutions, we keep the requirements very general. The nested skeleton criterion \hyperref[prop:CR1]{CR1} 
ensure that once a skeleton point has been removed, it is never added back. The 
enveloping disk criterion \hyperref[prop:CR2]{CR2} is less intuitive, but we discuss why we took 
this specific design choice in Section~\ref{sec:envelope}. \hyperref[prop:CR2]{CR2} is an auxiliary construction designed to guarantee 
a minimal shrinkage speed $s$ and makes the discrete and continuous setting 
more consistent. As such, we can choose the disk centre freely outside of the skeleton. Based on the skeleton evolution, we define the sparsification 
path that indicates, which parts of the skeleton are removed.

\begin{definition}[Continuous Sparsification Path]\label{def:cont_sparspath}	
	Given a skeleton $\SK$, a \textit{continuous sparsification path} $P: [0,\infty) \times  [0,\infty) \rightarrow \mathcal{P}(\SK)$ is a function that maps a time frame with start $t_1$ and end $t_2>t_1$ to the subset $P(t_1,t_2)$ of the skeleton $\SK$ that is removed in this time frame: 
	\[
		P(t_1, t_2) := \SK(t_1) \setminus \SK(t_2)
	\]
\end{definition}

\noindent
In particular, for all $t \geq 0$ it holds that $P(t,t)=\emptyset$. We can now define our scale-space.

\begin{definition}[Continuous Skeleton Sparsification Scale-Space] \label{def:cont_skelspace}
	Consider the binary image $f: \Omega \rightarrow \{0,1\}$ with domain $\Omega$, skeleton $\SK = \ST(f)$, reconstruction $\RC(\SK)$ and sparsification path $P(\cdot, \cdot)$.
	The \textit{continuous skeleton sparsification scale-space} is the family $(u(t), \SK(t))$ of images $u(t,\bm x): [0,\infty) \times \Omega \rightarrow \{0,1\}$ and evolving skeletons $\SK: [0,\infty) \rightarrow \mathcal{P}(\SK)$ obeying
	\begin{enumerate}
		\small
		\item $\SK(0) = \SK = \ST(f)$,
		\item $\SK(t) = \SK(0) \setminus P(0,t)$
		\hspace{30mm} \textnormal{for} $t > 0$,
		\item $u(t) := \RC(\SK(t))$ \textnormal{and} $O(t):=\RO(\SK(t))$ \hspace{7mm}  \textnormal{for} $t \geq 0$.
	\end{enumerate}
\end{definition}

\subsubsection{Continuous Scale-Space Properties}

Most of the properties \hyperref[prop:C1]{C1}-\hyperref[prop:C6]{C6} discussed in the following verify classical scale-space concepts of \citet{AGLM93}. The presence of corresponding architectural, simplification, and equivariance features identifies our newly constructed framework as a scale-space. Only Property~\hyperref[prop:C4]{C4} is entirely specific to our shape analysis case since it verifies that sparsification does not destroy important skeleton characteristics. All other properties have classical scale-space analogues.

\medskip
\noindent
{\bf Property C1: (Original Skeleton and Image as Initial State)\phantomsection\label{prop:C1}}\\[1mm]
By Definition~\ref{def:cont_skelspace}, we have $\SK(0) = \ST(f) = \SK$. Due to Property~\hyperref[prop:S1]{S1} of the medial axis transform, $u(0)=\RC(\SK(0))=\RC(\ST(f))=f$. 

\medskip
\noindent
{\bf Property C2: (Causality)\phantomsection\label{prop:C2}}\\[1mm]
We can equivalently reach the skeleton-image-pair $(u(t),\SK(t))$ in a step of size $t$ from time $0$ or in a step of size $t-\hat{t}$ from time $\hat{t}<t$. This holds according to Definitions \ref{def:cont_sparspath} and \ref{def:cont_skelspace} since
\begin{equation}
	\SK(0) \setminus P(0,t) = \SK(t) = \SK(\hat{t}) \setminus P(\hat{t},t) \, .
\end{equation}
In fact, due to the nested set property \hyperref[prop:CR1]{CR1}, we always have $P(t_2,t_3) \subset P(t_1,t_3)$ for $t_1 < t_2 < t_3$.

\medskip
\noindent
{\bf Property C3: (Lyapunov Sequences)\phantomsection\label{prop:C3}}\\[1mm] 
Scale-spaces need quantifiable simplification over time. Due to the generality of our framework, we only prove generic Lyapunov sequences here. For concrete applications in Section~\ref{sec:applications}, we can find Lyapunov sequences that reflect task-specific measures of simplification. A simple Lyapunov sequence can be defined via the object area.

\begin{proposition}[Decreasing Object Area] 
	\label{prop:area}
	\noindent The object area $a(t) := \mathcal{H}(O(t))$ in terms of its Hausdorff measure decreases over time $t$, i.e. $a(t_2) \leq a(t_1)$ for $t_2 > t_1$.
\end{proposition}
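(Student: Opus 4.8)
The plan is to use monotonicity of the object transform $\RO$ under set inclusion, combined with the nested-skeleton property CR1 and monotonicity of the Hausdorff measure. First I would observe that by Definition~\ref{def:cont_skelspace} we have $O(t) = \RO(\SK(t))$, and by CR1 (nested skeletons) the inclusion $\SK(t_2) \subset \SK(t_1)$ holds for $t_2 > t_1$. Since the object transform is defined as the union $\RO(\SK) = \bigcup_{\bm x \in \SK} B_{D(\bm x)}(\bm x)$, a smaller index set of skeleton points produces a union over fewer disks, hence $\RO(\SK(t_2)) \subset \RO(\SK(t_1))$. Thus $O(t_2) \subset O(t_1)$.

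Next I would invoke monotonicity of the Hausdorff measure: if $X \subset Y$ then $\mathcal{H}(X) \leq \mathcal{H}(Y)$. Applying this to $O(t_2) \subset O(t_1)$ immediately yields $a(t_2) = \mathcal{H}(O(t_2)) \leq \mathcal{H}(O(t_1)) = a(t_1)$, which is the claim. One small point worth flagging for rigour: the relevant Hausdorff measure here is the two-dimensional one (Lebesgue area), so that the objects — which are two-dimensional compact sets — have finite, meaningful measure; the notation $\mathcal{H}$ in Property~S2 was used for a lower-dimensional measure of the thin skeleton, so I would make the dimension explicit or at least note the intended convention. Compactness of $O(t)$, guaranteed by Property~S1 (the object transform maps compact sets to compact sets, and $\SK(t) \subset \SK(0)$ which is compact), ensures measurability and finiteness.

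The argument is essentially immediate, so there is no genuine obstacle; the only thing requiring minor care is making sure the reader does not conflate the two uses of $\mathcal{H}$ and that set inclusion of the skeletons really does transfer to set inclusion of reconstructions — this is exactly the content of the continuity/monotonicity remark in Property~S1 (``$\RO(X)$ defines a continuous mapping of any set $X$'' and behaves well on the sparsified skeletons $X \subset \SK$). I would therefore structure the proof as: (1) cite CR1 for $\SK(t_2)\subset\SK(t_1)$; (2) use the union definition of $\RO$ to get $O(t_2)\subset O(t_1)$; (3) use monotonicity of $\mathcal{H}$ to conclude $a(t_2)\le a(t_1)$. This also confirms that $a(t)$ is a valid Lyapunov functional: it is bounded below by $0$ and non-increasing along the evolution.
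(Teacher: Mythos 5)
Your proposal is correct and follows essentially the same route as the paper's proof: invoking CR1 to get $\SK(t_2)\subset\SK(t_1)$, passing this inclusion through the union defining $\RO$ to obtain $\RO(\SK(t_2))\subset\RO(\SK(t_1))$, and concluding the area inequality. Your extra remarks on monotonicity of the measure and on the dimension of $\mathcal{H}$ are sensible clarifications but do not change the argument.
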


\begin{proof}
	For a time $t_2 > t_1$, we have $(\ast): \SK(t_2) \subset \SK(t_1)$ due to requirement \hyperref[prop:CR1]{CR1}. Therefore, we can infer
	\begin{equation}
			\RO(\SK(t_2))  = \bigcup_{\bm x \in \SK(t_2)} B_{D(\bm x)}(\bm x) \stackrel{(\ast)}{\subset}  \bigcup_{\bm x \in \SK(t_1)} B_{D(\bm x)}(\bm x) = \RO(\SK(t_1))
	\end{equation}
	From $\RO(\SK(t_2)) \subset  \RO(\SK(t_1))$ we can conclude $a(t_2) \leq a(t_1)$. 
\end{proof}

Instead of the area, we can also use the diameter $\dia(\RO(\SK(t)))$ of the reconstructed object to define a Lyapunov sequence. For a closed set $S \subset \Omega$ we define 
\begin{equation*}
	\dia(S) := \max\{ d(\bm x, \bm y) \vertL \bm x, \bm y \in S \},
\end{equation*}
where $d(i,j)$ is the Euclidean distance. 

\begin{proposition}[Decreasing Object Diameter] 
	The object diameter decreases with increasing time $t$, i.e. for $t_2 > t_1 \geq 0$, we have
	\begin{equation}
		\dia(\RO(\SK(t_2))) \leq \dia(\RO(\SK(t_1))).
	\end{equation}
\end{proposition}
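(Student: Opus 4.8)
The plan is to mimic the structure of the proof of Proposition~\ref{prop:area}: reduce the diameter inequality to the set inclusion $\RO(\SK(t_2)) \subset \RO(\SK(t_1))$ that was already established there using requirement CR1. The key observation is that the diameter is monotone under set inclusion: if $S \subset S'$ are both closed (hence compact, since everything lives in the bounded closed object by property S1), then every pair $\bm x, \bm y \in S$ is also a pair in $S'$, so the maximum defining $\dia(S)$ is taken over a subset of the pairs defining $\dia(S')$, giving $\dia(S) \le \dia(S')$. First I would note that $\RO(\SK(t_i))$ is compact: $\SK(t_i) \subset \SK(0)$ is a subset of the compact skeleton, and by property S1 the object transform $\RO$ maps compact sets to compact sets, so the maxima in the definition of $\dia$ are actually attained and well-defined.

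Then the argument is essentially one line: from $t_2 > t_1$ and CR1 we get $\SK(t_2) \subset \SK(t_1)$, hence $\RO(\SK(t_2)) \subset \RO(\SK(t_1))$ exactly as in the proof of Proposition~\ref{prop:area}, and then monotonicity of the diameter under inclusion yields
\begin{equation*}
	\dia(\RO(\SK(t_2))) = \max_{\bm x, \bm y \in \RO(\SK(t_2))} d(\bm x, \bm y) \leq \max_{\bm x, \bm y \in \RO(\SK(t_1))} d(\bm x, \bm y) = \dia(\RO(\SK(t_1))).
\end{equation*}
One minor subtlety to handle cleanly is the degenerate case where $\SK(t_2) = \emptyset$; then $\RO(\SK(t_2)) = \emptyset$ and the diameter is conventionally $0$ (or $-\infty$), which is trivially bounded by the right-hand side, so the inequality still holds. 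I would mention this in a half-sentence rather than dwell on it.

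I do not expect any real obstacle here — the statement is an immediate corollary of the set inclusion already proven for the area, combined with the trivial lemma that $\dia$ is monotone with respect to $\subset$. The only thing worth being careful about is making sure the compactness needed for $\dia$ to be well-defined (i.e. for the $\max$ to be attained) is explicitly justified via property S1, so that the proof is self-contained; everything else is routine and the proof can be kept to three or four lines.
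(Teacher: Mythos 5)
Your proposal is correct and follows essentially the same route as the paper: both rest on the inclusion $\RO(\SK(t_2)) \subset \RO(\SK(t_1))$ carried over from the proof of Proposition~\ref{prop:area}, combined with the fact that points of the smaller set lie in the larger one. The paper merely phrases this as a short contradiction argument rather than invoking monotonicity of the maximum directly, and it does not dwell on the compactness or empty-set details you mention, so your version is if anything slightly more explicit.
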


\begin{proof}
	Assume that $\dia(\RO(\SK(t_2))) > \dia(\RO(\SK(t_1)))$.  Then, there are $\bm x, \bm y \in \RO(\SK(t_2))$ with $d(\bm x, \bm y) > \dia(\RO(\SK(t_1)))$. However, we know that for $t_2 > t_1$, we have $\RO(\SK(t_2)) \subset  \RO(\SK(t_1))$ according to our previous proof for Proposition~\ref{prop:area}.
	Therefore, we also have $\bm x, \bm y \in \RO(\SK(t_1))$ which contradicts 
	our assumption. Thus, the object diameter at $t_2$ cannot be larger than 
	the diameter at time $t_1$.
\end{proof}

\medskip
\noindent
{\bf Property C4: (Skeleton Properties)\phantomsection\label{prop:C4}}\\[1mm] 
The properties \hyperref[prop:S1]{S1} (equivalent shape descriptor) and \hyperref[prop:S2]{S2} (thin set) remain fulfilled under sparsification. We cover the equivariances \hyperref[prop:S2]{S4} separately.
\begin{description}
	\small
	\item[S1:] By definition, $u(t) = \RC(\SK(t))$.
	\item[S2:] For all $t>0$, we have $\SK(t) \subset \SK$ and $\mathcal{H}(\SK)=0$. Therefore, also $\mathcal{H}(\SK(t))=0$ and thus all evolving skeletons are thin sets.
\end{description}
Note that we intentionally do not require the homotopy preservation Property~\hyperref[prop:S2]{S3} to be fulfilled for all of our scale-spaces. This means that due to sparsification, a shape can potentially get disconnected. In Section~\ref{sec:shaperec}, we show how additional requirements can add this property if desired.

\medskip
\noindent
{\bf Property C5: (Equivariance)\phantomsection\label{prop:C5}}\\[1mm] 
In Section~\ref{sec:skelprop} we have reviewed the inherent equivariance property \hyperref[prop:S2]{S4} of the medial axis: The MAT commutes with any Euclidean transformation $\mathcal{T}$, including translation, rotation, reflection, and scaling. In Property~\hyperref[prop:C4]{C4}, we have also established that the reconstruction criterion $u(t) = \RC(\SK(t))$
holds for all $t$. Thus, for all $t>0$, we have skeleton/image pairs for which the equivariance $\ST(\mathcal{T}(u(t))) = \mathcal{T}(\ST(u(t))) = \mathcal{T}(\SK(t))$ holds. Consequentially, the equivariances transfer directly from the MAT to the whole scale-space evolution: If we apply the transform $\mathcal{T}$ to $f$ at time $t=0$, the image skeleton pairs are transformed accordingly by $\mathcal{T}$. 

\medskip
\noindent
{\bf Property C6: (Empty Image as Steady State)\phantomsection\label{prop:C6}}\\[1mm]
We know that the enveloping disk $D$ is centred in $\bm c_D \notin \SK$. Let $r_m$ denote the minimal distance of $\bm c_D$ to the set $\SK$. At time $T > \frac{r_0 - r_m}{s}$, we thus get $D(T) \cap \SK(0) = \emptyset$. Since $\SK(T) \subset \SK(0)$, the requirement $\SK(T) \subset D(T)$ can only be fulfilled for $\SK(T) = \emptyset$. Thus, the reconstruction is an empty image as well with object $O(t) = \emptyset$ and we reach an empty steady state in finite time. 

\subsubsection{Impact of the Enveloping Disk Requirement}
\label{sec:envelope}

If we consider the proofs of our six scale-space properties, the enveloping disk requirement of Definition~\ref{def:cont_sparspath} has only an impact on Property 6, the steady state. It ensures that the skeleton is removed completely in finite time and thus the final reconstruction is an empty image. Alternatively, we could require the nesting property \hyperref[prop:CR1]{CR1} to enforce proper subsets $\SK(t_2) \subsetneq \SK(t_1)$ for $t_2 > t_1$.

 However, the intersection of all $\SK(t)$ would still be non-empty and (given a shrinking diameter) would converge to a single skeleton point.
While this would constitute a valid steady state evolution in line with typical scale-spaces, it would deviate from the discrete case which has an empty steady state as we discuss in Section~\ref{sec:fulldisc}. Here, for the sake of consistency, we have chosen to enforce empty steady states for both. Alternatively, we could also modify the requirements for discrete scale-spaces such that their steady state becomes a single skeleton point.

Note that the linear shrinkage of the enveloping disk is not an overly restrictive requirement for possible sparsification paths. It is an auxiliary construction that only influences the implicit parametrisation of the artificial time variable. Nonlinear and asymmetric shrinkage of the skeleton is not ruled out by the definition, as long as this shrinkage is overall fast enough relative to the reference speed $s$ provided by the shrinkage of the enveloping disk. Thus, with an appropriate choice of $s$ and $c_D$, all skeleton sparsification paths of interest are possible.

\subsection{Fully Discrete Skeleton Sparsification Scale-Spaces}
\label{sec:fulldisc}

The fully discrete setting is relevant for our practical implementations in Section~\ref{sec:applications}. Here we consider an $n_x \times n_y$ discrete binary image $\bm f \in \{0,1\}^{n_x n_y}$ with discrete image domain $\Omega_D = \{1,...,n_x n_y\}$. In contrast to the fully continuous setting, it is more convenient to explicitly define the sparsification path first, especially since we are going to use it as a design criterion for the scale-space later on.

\begin{definition}[Discrete Sparsification Path]\label{def:sparspath}	
	Given a skeleton $\SK \subset \Omega_D$, a \textit{sparsification path} $P = (P_1,\dots,P_{m})$ with $m \in \NN^+$ is an ordered collection of non-empty sets that form a partition of $\SK$, i.e.: 
	\begin{enumerate}
		\item $P_k \neq \emptyset$,
		\item $P_k \cap P_\ell = \emptyset$, $k \neq \ell$
		\item $\bigcup_{\ell=1}^{m}P_\ell = \SK$. 
	\end{enumerate}
\end{definition}

Based on this finite number of sparsification steps, we can define the scale-space. It is not only discrete in space, but also has a discrete scale parameter $\ell$.


\begin{definition}[Discrete Skeleton Sparsification Scale-Space] \label{def:skelspace}
	Consider the discrete binary image $\bm f \in \{0,1\}^{n_x n_y}$ with domain $\Omega_D$, skeleton $\SK = \ST(f)$, reconstruction $\RC(\SK)$ and sparsification path $P=(P_1,...,P_m)$ partitioning $\SK$, $m \in \NN^+$.
	The \textit{discrete skeletonisation scale-space} is the family $(\bm u_\ell, \SK_\ell)_{\ell=0}^{m}$ of images $\bm u_\ell$ and skeletons $\SK_\ell$ obeying
	\begin{enumerate}
		\small
		\item $\SK_0 := \SK = \ST(\bm f)$,
		\item $\SK_\ell := \SK_0 \setminus \bigcup_{i=1}^{\ell} P_i$
		\hspace{22mm} \textnormal{for} $\ell \in \{1,\dots,m\}$,
		\item $\bm u_\ell := \RC(\SK_\ell)$ \textnormal{and} $O_\ell =\RO(\SK_\ell)$ \hspace{7mm}  \textnormal{for} $\ell \in \{0,\dots,m\}$.
	\end{enumerate}
\end{definition}

As in the continuous case, the sparsification path defines which points to prune in the transition from scale $\ell$ to scale $\ell+1$. However, the number $m$ of path sets uniquely determines the finite number $m+1$ of discrete scales. Both the number of steps and the composition of the sets can be freely chosen as long as the partition condition is fulfilled. Note that we also require $P_\ell \neq \emptyset$ and therefore at least one skeleton point is removed per sparsification steps. This implies a maximum of $m \leq |\SK|$ sparsification steps.

\subsection{Discrete Scale-Space Properties}
\label{sec:properties}

The following properties \hyperref[prop:D1]{D1}-\hyperref[prop:D6]{D6} are discrete counterparts to \hyperref[prop:C1]{C1}-\hyperref[prop:C6]{C6} from the continuous setting. Many of the properties carry over in very similar fashion, but there are a few notable exceptions.

\medskip
\noindent
{\bf Property D1: (Original Skeleton as Initial State)\phantomsection\label{prop:D1}}\\[1mm]
As in the continuous setting, $\SK_0 = \ST(\bm f)$ at scale $\ell=0$ is the 
skeleton of the original object due to Definition~\ref{def:skelspace}. However, 
as we have pointed out in Section~\ref{sec:skeldisc}, there is no guarantee 
that $\RC(\SK)=\bm f$ in the discrete setting. Thus, the initial image $\bm 
u_0$ is typically only an approximation to $\bm f$.

\medskip
\noindent
{\bf Property D2: (Causality)\phantomsection\label{prop:D2}}\\[1mm]
By following the sparsification path $P = (P_1,\dots,P_m)$, a scale $\ell \in \{0,...,m\}$ can be reached either from the initial scale $0$ or from from any intermediate scale $k \in \{0,...,\ell-1\}$. Similarly to the continuous case, we use Definition~\ref{def:skelspace} to verify
\begin{equation}
	\SK_\ell = \SK_0 \setminus \bigcup_{i=1}^{\ell}P_i = \left(\SK_0 \setminus \bigcup_{i=1}^{k}P_i \right) \setminus \bigcup_{i=k+1}^{\ell}P_i =  \SK_{k} \setminus \bigcup_{i=k+1}^{\ell}P_i \, .
\end{equation}
In the discrete setting, the time spans are replaced by a concrete number of steps: Scale $\ell$ can be reached in $\ell$ steps from the initial scale or in $\ell-k$ steps from scale $k$. 

\medskip
\noindent
{\bf Property D3: (Lyapunov Sequences)\phantomsection\label{prop:D3}}\\[1mm] 
The same two generic Lyapunov sequences as in the continuous setting apply. The proof for the \emph{decreasing object diameter} remains unchanged and therefore, we do not repeat it here. For the \emph{decreasing object area}, we have slight adaptations due to deviations in the definitions, but the proof remains very similar. Here, the image area is simply defined by the cardinality $\abs{\cdot}$ of the object set.

\begin{proposition}[Decreasing Object Area] 
	\label{prop:areadisc}
	\noindent The area $a_\ell := \abs{O_\ell}$ of the object decreases as the scale parameter $\ell$ increases, i.e. $a_k \leq a_\ell$ for $k > \ell$.
\end{proposition}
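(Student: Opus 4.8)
The plan is to mirror the argument already used for the continuous Proposition~\ref{prop:area}, replacing the Hausdorff measure by the cardinality $|\cdot|$ and the time parameters by the discrete scale indices. The whole statement reduces to a monotonicity-of-inclusion argument, so no genuine computation is needed.

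First I would establish the nesting of the discrete skeletons: for $k > \ell$, Definition~\ref{def:skelspace} gives $\SK_\ell = \SK_0 \setminus \bigcup_{i=1}^{\ell} P_i$ and $\SK_k = \SK_0 \setminus \bigcup_{i=1}^{k} P_i$, and since $\bigcup_{i=1}^{\ell} P_i \subset \bigcup_{i=1}^{k} P_i$, we obtain $\SK_k \subset \SK_\ell$. (This is the discrete analogue of requirement CR1, and it follows purely from $P$ being an ordered partition.)

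Next I would apply the object transform $\RO$ from Definition~\ref{def:transforms}. Using the inclusion $\SK_k \subset \SK_\ell$ term by term in the union,
\begin{equation*}
	O_k = \RO(\SK_k) = \bigcup_{\bm x \in \SK_k} B_{D(\bm x)}(\bm x) \subset \bigcup_{\bm x \in \SK_\ell} B_{D(\bm x)}(\bm x) = \RO(\SK_\ell) = O_\ell \, .
\end{equation*}
From $O_k \subset O_\ell$ and the monotonicity of cardinality on finite subsets of $\Omega_D$, I conclude $a_k = |O_k| \leq |O_\ell| = a_\ell$, which is the claim.

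I do not expect any real obstacle here: the only subtlety is making sure the bookkeeping of indices matches the convention in Definition~\ref{def:skelspace} (more removed $P_i$'s means a smaller skeleton, hence a smaller object), and noting that in the discrete setting $O_\ell \subset \Omega_D$ is always finite so the cardinality is well defined and monotone under inclusion. Everything else is identical in spirit to the continuous proof, so the write-up can simply reference Proposition~\ref{prop:area} for the structure.
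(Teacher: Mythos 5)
Your proposal is correct and follows essentially the same route as the paper: nested skeletons yield nested reconstructions under $\RO$, and cardinality is monotone under inclusion. The only cosmetic difference is that you prove the general inclusion $\SK_k \subset \SK_\ell$ directly from Definition~\ref{def:skelspace}, whereas the paper argues one step at a time via $\SK_\ell = \SK_{\ell+1} \cup P_{\ell+1}$; both amount to the same argument.
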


\begin{proof}
	For the path $P = (P_1,\dots,P_m)$ we have $\SK_\ell = \SK_{\ell+1} \cup P_{\ell+1}$ and thus
	\begin{equation*}
		\quad \bigcup_{\bm x \in \SK_\ell} B_{D(\bm x)}(\bm x)\quad = \quad \bigcup_{\bm x \in \SK_{\ell+1}} B_{D(\bm x)}(\bm x) \:\cup\: \bigcup_{\bm x \in P_{\ell+1}} B_{D(\bm x)}(\bm x) \, .
	\end{equation*}
	Therefore, we can derive
	\begin{equation*}
		\RO(\SK_\ell) = \quad \bigcup_{\bm x \in \SK_\ell} B_{D(\bm x)}(\bm x) \:\supseteq \bigcup_{\bm x \in \SK_{\ell+1}} B_{D(\bm x)}(\bm x) \quad = \RO(\SK_{\ell+1}) \, .
	\end{equation*}
	Thus, we can conclude $a_\ell = \abs{ \RO(\SK_\ell) }  \geq  \abs{ \RO(\SK_{\ell+1}) } = a_{\ell +1}$. 
\end{proof}

In general the object area is not strictly decreasing. Although	each sparsification step removes at least one skeleton point
	($P_{\ell+1}\neq\emptyset$), the reconstructed object may remain unchanged if all disks associated with the removed points are fully covered by disks
	of the remaining skeleton.

\medskip
\noindent
{\bf Property D4: (Skeleton Properties)\phantomsection\label{prop:D4}}

	The property follows from the corresponding properties of the discrete
	skeletonisation and reconstruction operators.
	
	Even though discrete skeletons do not necessarily fulfil the reconstruction criterion \hyperref[prop:S1]{S1}, $u_\ell = \RC(\SK_\ell)$ is fulfilled by Definition~\ref{def:skelspace}. Since the object is created from the sparsified skeleton itself, there are no discretisation artefacts beyond the potential mismatch to the original shape as discussed in Property~\hyperref[prop:D1]{D1}.
	
	By definition, the initial skeleton $\SK$ has one pixel thickness. Removing pixels cannot increase this thickness and therefore, the property \hyperref[prop:S2]{S2} is preserved for all $\Sigma_\ell$, $\ell \in \{0,...,m\}$.
	
	Together, these properties ensure that the skeletonisation and reconstruction operators are
	consistent with each other in the discrete setting, analogous to the continuous case discussed in Property~\hyperref[prop:C4]{C4}.

As in the continuous setting, removing skeleton points without additional constraints can disconnect the skeleton graph and thus change the topology of the skeleton. The homotopy preservation according to \hyperref[prop:S2]{S3} is thus not generally fulfilled.

\medskip
\noindent
{\bf Property D5: (Equivariance)\phantomsection\label{prop:D5}}\\[1mm] 
As in the continuous setting, the equivariance properties directly carry over from the medial axis to the scale-space evolution. However, following our discussion of discrete skeletons in Section~\ref{sec:skeldisc}, this does not apply to all Euclidean transformations, only to on-grid translations, $90^\circ$ rotations, and mirroring along image axes.

\medskip
\noindent
{\bf Property D6: (Empty Image as Steady State)\phantomsection\label{prop:D6}}\\[1mm]
Since $P_1,\dots,P_m$ partition $\SK$ and $\SK_0 = \SK$ by Definition~\ref{def:sparspath} and \ref{def:skelspace}, we can immediately conclude $\SK_m =  \SK \setminus \bigcup_{\ell=1}^m P_\ell = \SK \setminus \SK = \emptyset$. Since the final skeleton in empty, the steady state is an empty image.

\medskip

In summary, the discrete scale-spaces capture all major properties of the continuous setting. Only discretisation artefacts in the reconstruction and Euclidean transformations limit properties \hyperref[prop:D1]{D1} and \hyperref[prop:D4]{D4} slightly compared to their continuous counterparts. This shows that our discrete implementations capture the main ideas of 	the proposed scale-space framework. While several properties carry over
	directly to the discrete setting, others only hold in a weaker form or
	require additional considerations due to discretisation effects.

\subsection{A Note on Semi-Discrete Skeletonisation Scale-Spaces}
\label{sec:semi}

It is also possible to construct scale-spaces that are time-discrete and space-continuous or vice versa. Since properties and proofs closely resemble a combination of \hyperref[prop:C1]{C1}-\hyperref[prop:C6]{C6} and \hyperref[prop:D1]{D1}-\hyperref[prop:D6]{D6}, we do not discuss them here in detail to avoid redundancies. 

The time-continuous and space-discrete case inherits the drawbacks of both the fully discrete and the fully continuous scale-space: Equivariances and initial state are weakened by spatial discretisation artefacts. Moreover, the time continuous setting requires an additional criterion like the enveloping shrinking disks to make it consistent with the time-discrete steady state.

As expected, the second semi-discrete scale-space enjoys the benefits of both worlds. Since it is spatially continuous, it inherits full Euclidean equivariance and a perfect reconstruction of the initial shape. Since the scale parameter is discrete, we can define the sparsification path directly as in Definition~\ref{def:sparspath}. We only need to replace the discrete image $\bm f$ and the domain $\Omega_D$ by their continuous counterparts. In our previous conference publication~\cite{GP25}, we have discussed this case jointly with the fully discrete scale-space framework.

\subsection{Skeleton Densification Scale-Spaces}

All previous scale-space frameworks were motivated by the sparsification 
paradigm that was also applied by \citet{CPW19}. This is a top-down strategy, 
which starts with the full known data and gradually reduces it until nothing is 
left. However, in the literature on data optimisation, where sparsification has 
its origins, there is also an inverse counterpart: Densification 
\cite{CW21,DAW21,KBPW18} starts with an empty image and adds new known data in 
every step. In data optimisation, both strategies co-exist since they have 
different strengths and weaknesses (see Section~\ref{sec:rel_opt}). We want our 
scale-space theory to cover as many existing practical applications as 
possible. Therefore, we define densification scale-spaces as an alternative to 
sparsification.

To avoid redundancies to previous sections, we limit ourselves to the fully discrete theory and only discuss noteworthy differences. A fully continuous or semi-discrete theory can be constructed in an analogous way.

\subsubsection{Densification Scale-Spaces}

In a straight-forward way we can define a densification scale-space by inverting the role of the sparsification path. We require a \emph{densification path} to adhere to the same requirements as in Definition~\ref{def:sparspath}: The sets $P_1,\dots,P_m$ are all non-empty and form a partition of the skeleton $\SK$. However, our initial skeleton $\SK_0 = \emptyset$ is now empty and the densification path specifies which points are \emph{added} to the skeleton to transition from scale $\ell$ to $\ell+1$. This leads us to the following scale-space definition.

\begin{definition}[Discrete Skeleton Densification Scale-Space] \label{def:densspace}
	Consider the discrete binary image $\bm f \in \{0,1\}^{n_x n_y}$ with domain $\Omega_D$, skeleton $\SK = \ST(f)$, reconstruction $\RC(\SK)$ and sparsification path $P=(P_1,...,P_m)$ partitioning $\SK$, $m \in \NN^+$.
	The \textit{discrete skeletonisation scale-space} is the family $(\bm u_\ell, \SK_\ell)_{\ell=0}^{m}$ of images $\bm u_\ell$ and skeletons $\SK_\ell$ obeying
	\begin{enumerate}
		\small
		\item $\SK_0 := \emptyset$,
		\item $\SK_\ell
		=
		\SK_0 \cup \bigcup_{i=1}^{\ell} P_{m-i}.$
		\hspace{16.7mm} \textnormal{for} $\ell \in \{1,\dots,m\}$,
		\item $\bm u_\ell := \RC(\SK_\ell)$ \textnormal{and} $O_\ell =\RO(\SK_\ell)$ \hspace{7mm}  \textnormal{for} $\ell \in \{0,\dots,m\}$.
	\end{enumerate}
\end{definition}

The causality property D2, the skeleton properties \hyperref[prop:D4]{D4} and equivariances \hyperref[prop:D5]{D5} all carry over from sparsification scale-spaces directly with the same arguments. All other properties are only slightly adjusted to the new setting.

\medskip
\noindent
{\bf Property DD1: (Empty Skeleton and Image as Initial State)\phantomsection\label{prop:DD1}}\\[1mm]
By definition, $\SK_0 = \emptyset$ at scale $\ell=0$ and thus the corresponding reconstructed image $u_0 = \RC(\emptyset)$ is empty as well.

\medskip
\noindent
{\bf Property DD3: (Generalised Lyapunov-type Sequences)\phantomsection\label{prop:DD2}}\\[1mm] 
The \emph{object diameter} and \emph{object area} are increasing. While Lyapunov sequences are typically decreasing, the reverse direction of the densification space has increasing sequences as a natural counterpart to the the densification setting. The corresponding proofs carry over from \hyperref[prop:D3]{D3} with minimal adaptations.

\medskip
\noindent
{\bf Property DD6: (Original Skeleton as Steady State)\phantomsection\label{prop:DD3}}\\[1mm]
Since $P_1,\dots,P_m$ partition $\SK$, we obtain $\SK_m := \SK_0 \cup \bigcup_{i=1}^{m} P_i = \SK$ according to Definition~\ref{def:densspace}. The corresponding image $u_m = \RC(\SK)$ approximates the original object according to the discrete skeletonisation algorithm used to compute $\SK$. 

\subsubsection{Extended Densification}
\label{sec:densext}

The previous Section shows that our theory extends to the inverse counterpart of sparsification. This result alone is not very surprising, but it also means that both directions in the scale-space are equally well-posed. In classical scale-space theory this is typically not the case. Going from fine to coarse scales is usually well-posed, but going from coarse to fine scales is often problematic due to ambiguities: For instance, a skeleton consisting of a single point could be a legitimate subset of an infinite number of larger skeletons containing said point. All of them constitute valid densifications of the initial skeleton. This noteworthy fact also opens up some entirely new perspectives. 

The steady-state in sparsification is final by design. All points have been removed and there is no interesting information left in the empty image. In contrast, densification stops when the full skeleton is reached. This is a natural stopping point, but there are still interesting points left to be added. While the skeleton might give a full representation of the original object (in the continuous setting), this is not necessarily the case for discrete skeletons. There are many practical reasons why one might want to add more points. For instance, this could be the densification counter-part to pruning: Instead of removing erroneous branches, one could correct overly strict thinning applied by the skeletonisation algorithm. It might also be interesting to consider the transition to more general descriptors such as symmetry sets, a superset of skeletons. In Section~\ref{sec:applications}, we highlight a recursive extension of the skeleton that adds additional points that yield a shape representation that is overcomplete, but relevant, e.g., for 3-D printing.

Such generalisations of the skeleton scale-spaces can be obtained by simply lifting the requirement that the densification path forms a partition of the original skeleton, extending it to the whole object instead. 

\begin{definition}[Extended Densification Path]\label{def:extpath}	
	Given an object $O \subset \Omega_D$, an \textit{extended densification path} $P = (P_1,\dots,P_{m})$ with $m \in \NN^+$ is an ordered collection of non-empty sets that form a partition of $O$, i.e.:
	\begin{enumerate}
		\item $P_k \neq \emptyset$, 
		\item $P_k \cap P_\ell = \emptyset$, $k \neq \ell$,
		\item and $\bigcup_{\ell=1}^{m}P_\ell = O$.
	\end{enumerate} 
\end{definition}

At first glance, this seems like a full departure from skeletons to a pure 
shape densification. However, the scale-space definition still uses the 
reconstruction transform from an \emph{overcomplete skeleton} $\SE$. The 
definition of such a scale-space is identical to Definition~\ref{def:densspace} 
except for the relaxed definition of the densification path. Still, we 
have $\SK_0 := \emptyset$, $\SK_\ell := \SK_0 \cup \bigcup_{i=1}^{\ell} P_i$, 
and  $\bm u_\ell := \RC(\SK_\ell)$ for  $\ell \in \{0,\dots,m\}$.

The initial state \hyperref[prop:DD1]{DD1}, causality \hyperref[prop:DD2]{DD2}, Lyapunov Sequences \hyperref[prop:DD3]{DD3}, and equivariances DD5 are all not affected by this change, which follows with the same arguments as before. 

\newpage
\noindent
{\bf Property DD4: (Skeleton Properties)\phantomsection\label{prop:DD4}}\\[1mm] 
	By definition, $u_\ell = \RC(\SK_\ell)$. Thus, the reconstruction property still applies. In particular, for all $\ell$, we still have $\RC(\SK_\ell) \subset O$ since $\SE \subset O$. All disks around object points with radii given by the distance map are fully contained in the object $O$.
	
	However, \hyperref[prop:S2]{S2} is not fulfilled any more since $\Gamma$ since the extended
		representation $\Gamma$ is not restricted to thin skeleton pixels. During densification additional pixels from the reconstructed object
		may be inserted, so that $\Gamma$ can locally become thicker than a
		discrete thin skeleton. This behaviour is illustrated later in
		Section~\ref{sec:stiffexp}, where the densification with extension
		progressively fills parts of the reconstructed object.

\medskip
\noindent
{\bf Property DD6: (Full Shape and Original Image as Steady State)\phantomsection\label{prop:DD6}}\\[1mm]
Since $P_1,\dots,P_m$ partition $O$, we obtain $\SE_m = O$ and thus $\bm u_m = \RC(O) = \bm f$. Both the final overcomplete skeleton $\SE$ and its reconstruction are identical to the original object. Note that for pixels on the object boundary we consider a reconstruction with disks that contain only the boundary pixel itself. 

\medskip

Surprisingly, this sweeping generalisation of the skeletonisation scale-spaces preserves most of its properties.

\section{Task-Specific Skeletonisation Scale-Spaces}
\label{sec:applications}

 So far, our frameworks mostly focus on strong 
architectural and equivariance properties. In practical applications, it is 
also desirable to have task-specific, quantifiable simplification. This 
guarantees that the scale-space provides a coarse-to-fine representation that 
is useful for the task to be solved.

In this section we demonstrate with proof-of-concept examples from different fields that our framework is highly adaptable. We present three classical applications: \emph{branch pruning} as a pre-processing step of shape recognition, shape \emph{compression}, and \emph{stiffness enhancement} for 3-D printing. The first two applications rely on sparsification and have been published previously in our conference publication~\cite{GP25}. We have considerably extended the evaluations for these methods. For \emph{compression}, we also add densification experiments as an alternative. With \emph{stiffness enhancement}, we demonstrate how we can push the limits of our extended densification framework to provide a scale-space theory for applications that go beyond the classical definition of the medial axis.

\subsection{Evaluation Design}

While each of our practical examples has different focus, our experiments share common implementation details, data sets, and evaluation metrics. For the sake of reproducibility, we discuss these technical details first.

\subsubsection{Skeletonisation Implementation}

We require discrete skeletons as a starting point. As discussed in Section~\ref{sec:skeldisc}, all discrete skeletons are approximations and thus the exact definition of the discrete skeleton set $\SK$ depends significantly on the method of computation. For all of our experiments, we use the same algorithm, namely maximal disk thinning \cite{PB12} based on the skeleton membership criteria of \citet{remy2005}. This method guarantees that the skeleton has a one pixel width and preserves the homotopy of the original shape. Both for the skeleton computation and the reconstruction we require distance maps, which we compute with the method of \citet{meijster2000}. For the reconstruction itself, we rely on a simple disk drawing algorithm according to Definition \ref{def:transforms}. A sample implementation is available on Zenodo \cite{GP26code}.

\subsubsection{Data Sets}
\label{sec:datasets}

\begin{figure}[t]
	\small
	\tabcolsep4pt
	\begin{center}
		\begin{tabular}{ccc}
			original & segmentation & skeleton \\
			\hline
			\includegraphics[width=0.3\textwidth]{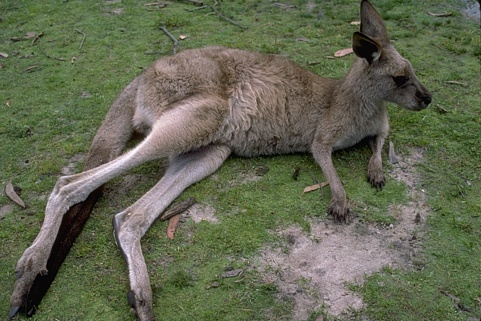} &
			\includegraphics[width=0.3\textwidth]{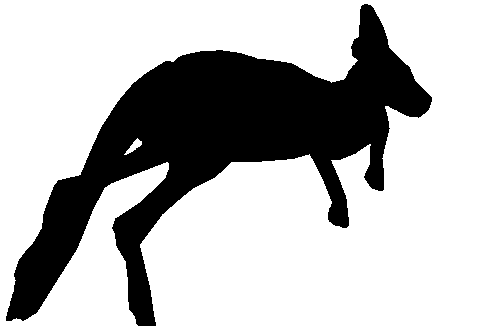} &
			\includegraphics[width=0.3\textwidth]{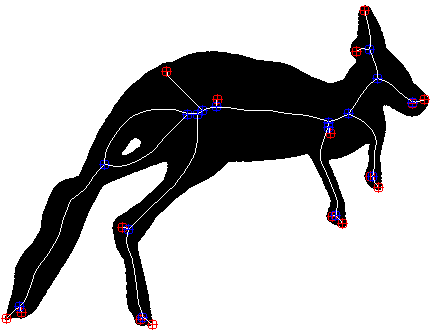} \\
		\end{tabular}
		(a) Image 69020 of BSDS500
		
		\begin{tabular}{cccc}
			bat-1 & beetle-20 & & pocket-9 \\
			\hline
			\includegraphics[height=0.18\textwidth]{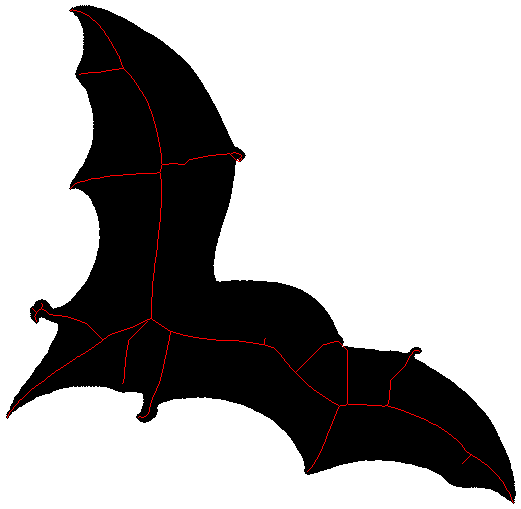} &
			\includegraphics[height=0.18\textwidth]{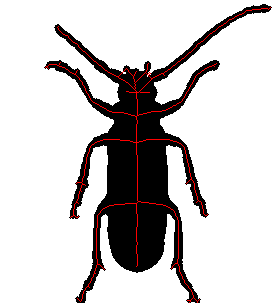} &
			\includegraphics[height=0.18\textwidth]{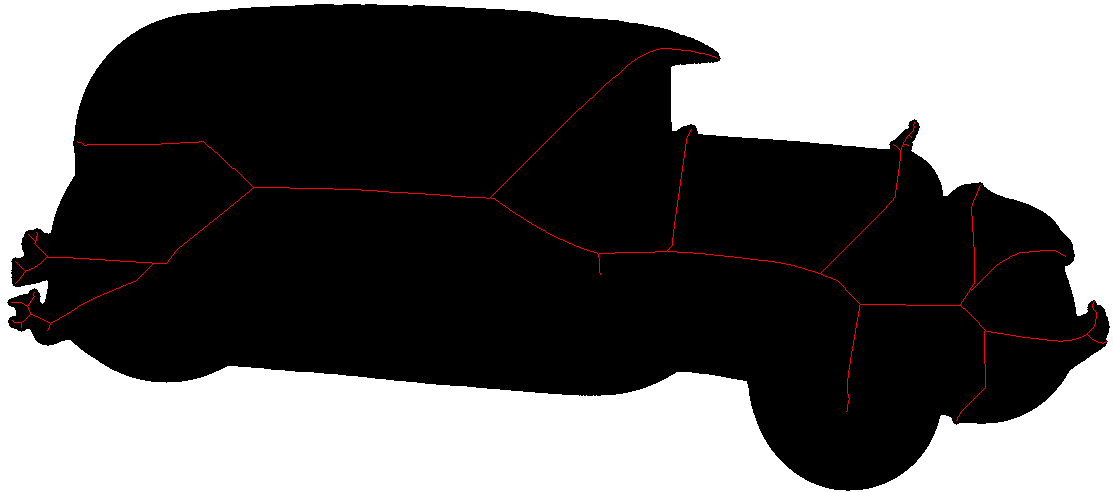} &
			\includegraphics[height=0.18\textwidth]{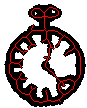}\\
		\end{tabular}
		(b) Selected Images of CE-Shape-1
	\end{center}
	\caption{We use \textbf{test images} from two databases. \textbf{(a)} With one example from the BSDS500~\cite{AMFM11} data set, we illustrate a use case where skeletonisation is part of a pipeline that extracts binary shapes from real-world images. \textbf{(b)} The 1400 images of CE-Shape-1~\cite{latecki2000} are the foundation for our quantitative evaluation. Here, we show four examples to illustrate typical shapes. They vary significantly in motive, skeleton complexity and image resolution.\label{fig:databases}}
\end{figure}

As an input to our algorithm, we require binary shapes. Therefore, we consider two test datasets: BSDS500~\cite{AMFM11} contains ground truth segmentation data for real-world images. This database acts as an example for a typical pipeline in which our scale-space framework would constitute an intermediate step. Before our algorithm is deployed, a segmentation approach extracts a binary shape from a real-world image (see Fig.~\ref{fig:databases}). Our skeleton would then typically feed into other algorithms, e.g. for shape matching. For our visual comparisons, we have extract individual objects from the ground truth segmentations for our visual comparison.
	
For our quantitative experiments, we rely on the MPEG7 core experiment CE-Shape-1 database \cite{latecki2000} instead. It contains a variety of different classes of synthetic shape images that can be directly used without the need for further pre-processing.

\subsubsection{Quality Criteria}
\label{sec:metrics}

An obvious quality measure for skeletons is the reconstruction error as the sum of false negative and false positives in a discrete reconstruction of the initial object $O$. However, there are a few more quality criteria that are useful for our task-specific scale-spaces \cite{PB12}.

\medskip
\noindent
The \textit{reconstruction error} $\mathcal{E}_\ell$ is the number of missing object points
\begin{equation}
	\mathcal{E}_\ell := \abs{O_0} - \abs{O_\ell} \, .
\end{equation}
Note that this error only considers false negatives in the object representation. False positives can not occur by definition since we only use this measure for reconstructions from subsets of the full skeleton.

\medskip

\noindent
\emph{Skeleton minimality} relates skeleton points $\abs{\SK_\ell}$ to the number $\abs{O_\ell}$ reconstructed image points. 
\begin{equation}
	\mathcal{M}_\ell :=  \frac{\abs{\SK_\ell}}{\abs{O_\ell}} \, .
\end{equation}

\medskip

\noindent
\textit{Skeleton complexity} is the total number of significant points in the skeleton:
\begin{equation}
	\mathcal{C}_\ell := \abs{E(\SK_\ell)} + \abs{B(\SK_\ell)}\, .
\end{equation}
This definition relies on the notions of endpoints $E$ and branching points $B$ (see Definition~\ref{def:points}).

\subsubsection{Baseline: Random Sparsification}

As a baseline for our task-specific scale-spaces we consider a sparsification path that at each scale $\ell$ selects a point $\bm x \in \SK_\ell$ uniformly at random. This sparsification path is completely independent of the structure of the object and does not follow any goals. Following the taxonomy of \citet{CPW19}, we call such a scale-space \emph{uncommitted}. All our application-specific scale-spaces take the shape or topology of the original object into account and are therefore \emph{committed}. 

Note that even this simple uncommitted sparsification path yields a scale-space that obeys the properties \hyperref[prop:D1]{D1}-\hyperref[prop:D6]{D6} from Section~\ref{sec:properties}.

\subsection{Skeletonisation Scale-Spaces for Branch Pruning}
\label{sec:shaperec}

First, we investigate classical skeleton pruning~\cite{BLL07,ML12,Og94,SB98,SS16,TH02} as a special case of our sparsification framework. Since this is often a preprocessing step for applications such as shape matching, the homotopy preservation property \hyperref[prop:S2]{S3} from Section~\ref{sec:skelprop} is important here. The connectivity of the object should be preserved. This constitutes a constraint for our sparsification path. 

In addition, end- and branching points from Definition~\ref{def:points} are semantically important for shape matching. Since endpoints and branching points mark where individual arcs of the skeleton intersect or end, they are more relevant for shape recognition than simple points. Thus, skeleton complexity as defined in Section~\ref{sec:metrics} is a good criterion for simplification. 

Pruning also aims to remove so-called spurious branches that are the result of small boundary perturbations. These can have a large effect on skeleton minimality since they can increase the size of the skeleton significantly without contributing much to the object reconstruction. We formalise this in terms of the reconstruction impact.

\begin{definition} [Reconstruction Impact]
	\label{def:impact}
	For a set of skeleton points $S \subseteq \SK_\ell$ and $\ell \in \{0,\dots,m\}$ we define the \textit{reconstruction impact} $I_{\ell,S}$ by
	\begin{equation}
		I_{\ell,S} := O_\ell \setminus \Bigl( \bigcup_{i \in \SK_\ell \setminus S} B_{D(i)}(i) \Bigr).
	\end{equation}
\end{definition}

Thus, our goals for designing a reconstruction path is removing whole branches of the skeleton without disconnecting it and avoiding increases in skeleton complexity. If we have multiple choices for branches to remove, we want to choose the ones with the least reconstruction impact first, since those are most likely to arise from boundary perturbations. We incorporate these goals into our branch pruning path described by Algorithm~\ref{alg:skelshape}. In the following, we show that the additional requirements of this pruning path lead to skeleton complexity as a new Lyapunov sequence for our task-specific scale-space.

\begin{algorithm2e}[t]
	\caption{Branch Pruning Path\label{alg:skelshape}}
	$\ell \gets 0$\;
	\While{$\abs{\SK_\ell} > 0$}{
		\tcc{Consider branches with endpoints or full skeleton.}
		$A_\mathcal{E} \gets \{ A \in \mathcal{A}(\SK_\ell)\, \vertL \, E(\SK_\ell) \cap A \neq \emptyset \}$, \lIf{$A_\mathcal{E} = \emptyset$}{$A_\mathcal{E} \gets \{\SK_\ell\}$}
		\tcc{Select branch with smallest reconstruction impact.}
		$A_{\min} \gets \textnormal{argmin}_{A \in \mathcal{A}_\mathcal{E}} \abs{I_{\ell, A}}$ \;
		\tcc{Remove whole branch excluding branching points.}
		$P_{\ell+1}  \gets \{a_i \in A_{\min} \, \vertL \, a_i \notin B(\SK_\ell) \}$,
		$\SK_{\ell+1} \gets \SK_\ell \setminus P_{\ell+1}$, 
		$\ell \gets \ell + 1$\;
	}	
\end{algorithm2e}

\begin{proposition}[Skeleton Complexity is a Lyapunov Sequence]
	Skeleton complexity decreases with increasing scale $\ell$, i.e. $\mathcal{C}_\ell \geq \mathcal{C}_{\ell+1}$.
\end{proposition}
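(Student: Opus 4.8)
The plan is to prove the one-step inequality $\mathcal{C}_{\ell+1}\le\mathcal{C}_\ell$ (which is exactly the Lyapunov property) by tracking the set of \emph{significant points} $V_\ell := E(\SK_\ell)\cup B(\SK_\ell)$, so that $\mathcal{C}_\ell=|V_\ell|$ since the two sets are disjoint. Concretely, I would aim to establish the inclusion $V_{\ell+1}\subseteq V_\ell\setminus P_{\ell+1}$, from which $\mathcal{C}_{\ell+1}=|V_{\ell+1}|\le|V_\ell\setminus P_{\ell+1}|\le|V_\ell|=\mathcal{C}_\ell$ is immediate, with strict inequality as soon as $P_{\ell+1}$ contains a significant point of $\SK_\ell$. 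Two elementary observations localise the argument: first, the type of a skeleton point (endpoint, branching, simple) depends only on its $8$-neighbourhood, so deleting the points of $P_{\ell+1}$ can only alter the classification of a point of $\SK_{\ell+1}$ that is $8$-adjacent to $P_{\ell+1}$; second, deletion only decreases neighbour counts, so no point can \emph{become} a branching point, and the only newly created significant points can be endpoints.

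The core step is then the structural claim that in every iteration of Algorithm~\ref{alg:skelshape}, the only point of $\SK_{\ell+1}$ that is $8$-adjacent to $P_{\ell+1}$ was already in $B(\SK_\ell)$. I would split along the two branches of the algorithm. If $A_{\mathcal{E}}\neq\emptyset$, then $A_{\min}=\{a_1,\dots,a_k\}$ is an arc containing an endpoint, say $a_1$; since $P_{\ell+1}=A_{\min}\setminus B(\SK_\ell)$ collects $a_1$ together with all simple points of $A_{\min}$, we have $A_{\min}\setminus P_{\ell+1}\subseteq\{a_k\}\cap B(\SK_\ell)$. By the arc property in Definition~\ref{def:points}, each simple point of $A_{\min}$ is $8$-adjacent only to its two neighbours inside $A_{\min}$, and $a_1$ only to points of $A_{\min}$; hence the unique point of $\SK_{\ell+1}$ adjacent to $P_{\ell+1}$ is the far terminal point $a_k$, and only in the subcase where $a_k$ is a branching point. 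That point lies in $B(\SK_\ell)\subseteq V_\ell$, can only lose neighbours (so it stays branching, or becomes simple, or becomes an endpoint), and therefore contributes at most as much to $\mathcal{C}_{\ell+1}$ as to $\mathcal{C}_\ell$; every other point of $\SK_{\ell+1}$ keeps its type. Combined with $a_1\in P_{\ell+1}\cap E(\SK_\ell)$ this gives $\mathcal{C}_{\ell+1}\le\mathcal{C}_\ell-1$. If instead $A_{\mathcal{E}}=\emptyset$, then $\SK_\ell$ has no endpoints, $P_{\ell+1}=\SK_\ell\setminus B(\SK_\ell)$, and $\SK_{\ell+1}=B(\SK_\ell)$; thus $V_{\ell+1}\subseteq B(\SK_\ell)=V_\ell$ and $\mathcal{C}_{\ell+1}\le\mathcal{C}_\ell$ (here equality is possible, since branching points may survive as isolated endpoints).

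The main obstacle I anticipate is making the adjacency bookkeeping in the first case fully rigorous around the ``thick-tip'' endpoint configurations permitted in Definition~\ref{def:points} (endpoints with two or three $8$-neighbours) and, relatedly, justifying cleanly that an arc meets the remainder of $\SK_\ell$ only at its two terminal points, so that deleting its interior plus its endpoint cannot turn a distant simple point into an endpoint. Once this locality is pinned down, the remaining verification is the short two-case analysis above. I would also include a brief remark that $P_{\ell+1}\neq\emptyset$ in every iteration — witnessed by the endpoint $a_1$ in the first case and by the non-branching points of $\SK_\ell$ in the second — so that the branch-pruning path is well defined and the one-step bound indeed applies at every scale $\ell$.
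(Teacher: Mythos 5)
Your proof is correct and takes essentially the same route as the paper's: a per-step case analysis over the branch removed by Algorithm~\ref{alg:skelshape}, resting on the locality observation that only the terminal point $a_k \in B(\SK_\ell)$ can change type while the removed endpoint $a_1$ lowers the count, so your inclusion argument simply merges the paper's Cases 1 and 2 (both terminal points are endpoints vs.\ one branching point), and the adjacency caveat you flag (that an arc meets the rest of $\SK_\ell$ only at its terminal points) is left equally implicit in the paper's proof. Incidentally, your handling of the fallback case is the more faithful one: when $A_\mathcal{E} = \emptyset$ the paper's Case 3 asserts $P_{\ell+1} = \SK_\ell$, whereas the algorithm retains the branching points, giving $\SK_{\ell+1} = B(\SK_\ell)$ as you state, which still yields $\mathcal{C}_{\ell+1} \leq \mathcal{C}_\ell$.
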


\begin{proof} For a scale $\ell \in \{0,...,m-1\}$, there are the following possible cases for the next step $P_{\ell+1}$ in the sparsification path:
	\begin{description}
		\item[Case 1:] $P_{\ell+1}$ has exactly two endpoints, which are both removed. The branch is not connected to other skeleton components and thus cannot influence other points. Then $\mathcal{C}_{\ell+1} = \mathcal{C}_{\ell}-2 < \mathcal{C}_{\ell}$.
		\item[Case 2:] By Algorithm~\ref{alg:skelshape}, $P_{\ell+1}$ has exactly one endpoint, which is removed. The arc is connected to a branching point. Its role can change to an end or simple point, but this cannot increase complexity. Thus, $\mathcal{C}_{\ell+1} \leq \mathcal{C}_{\ell}-1 < \mathcal{C}_{\ell}$.
		\item[Case 3:] $P_{\ell+1} = \SK_\ell$. The remaining skeleton is removed, thus $\mathcal{C}_{\ell+1} = 0 \leq \mathcal{C}_{\ell}$. 
	\end{description} 
\end{proof}

\subsection{Skeletonisation Scale-Spaces for Compression}
\label{sec:comp}

For shape compression \cite{Mu20}, homotopy preservation is not relevant. We 
merely want to represent the initial shape as accurately as possible with as 
few skeleton points as possible. More sophisticated versions of such a 
task-specific scale-space could also incorporate coding aspects such as the 
entropy of the remaining skeleton points, but this is beyond the scope of our 
proof of concept. We refer to \citet{Mu20} for such considerations. Instead, we 
solely rely on the reconstruction impact from Definition~\ref{def:impact}.
By removing $S$ from the skeleton $\SK_\ell$, we remove $I_{\ell,S}$ from the object and thus obtain in every step
\begin{equation}
	\label{eq:removeimpact}
	O_{\ell+1} = O_\ell \setminus I_{\ell,S} \, .
\end{equation}
Since we want to minimise a loss of object points for every skeleton part removed, we have to minimise $\abs{I_{\ell,S}}$ in every step. Consequentially, our Algorithm~\ref{alg:skelcomp} defines a skeleton compression path that minimises this error. Here, we use the shorthand notation $I_{\ell,i} := I_{\ell,\{i\}}$ for a given pixel index $i \in \SK$. 
\begin{algorithm2e}[t]
	 \caption{Skeleton Compression Path\label{alg:skelcomp}}
	$\ell \gets 0$, $r \in \NN$ user parameter (points to remove per step) \; 
	\While{$\abs{\SK_\ell} > 0$}{
		$k \gets \abs{\SK_\ell}$, $s \gets \min(r,\abs{\SK_\ell})$\;
		\tcc{Order all skeleton points by reconstruction influence.}
		Let $\{c_1,...,c_k\} = \SK_\ell$ with $\abs{I_{\ell,c_i}} \leq \abs{I_{\ell,c_j}}$ for $i \leq j$\;
		\tcc{Remove $s$ points with smallest impact on reconstruction.}
		$P_{\ell+1} \gets \{c_1,...,c_s\}$, $\SK_{\ell+1} \gets \SK_\ell \setminus P_{\ell+1}$, $\ell \gets \ell + 1$\;
	}	
\end{algorithm2e}

By design, the relative error constitutes a Lyapunov sequence as a corollary to Proposition~\ref{prop:areadisc} since $\abs{O_\ell}$ is shrinking and thus the relative error is increasing with the scale. This is to be expected, but not a particularly useful property for a compression algorithm. However, we can also guarantee that the increase in error for every step is indeed minimal.

\begin{proposition}[Minimal Relative Error Increase]
	Among all possible sparsification paths, for all $\ell \in \{0,...,m-1\}$, the increase in relative error $\mathcal{E}_{\ell+1}-\mathcal{E}_\ell$ is minimal for a skeleton compression scale-space.
\end{proposition}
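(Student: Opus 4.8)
The plan is to reduce the proposition to a one-step optimality statement and then analyse that step combinatorially. I would start from $\mathcal{E}_\ell = |O_0| - |O_\ell|$ and \eqref{eq:removeimpact}, which together give, for the set $P_{\ell+1}$ removed in the transition from scale $\ell$ to $\ell+1$, the identity $\mathcal{E}_{\ell+1} - \mathcal{E}_\ell = |O_\ell| - |O_{\ell+1}| = |I_{\ell,P_{\ell+1}}|$ (using $I_{\ell,P_{\ell+1}} \subseteq O_\ell$). So, comparing the compression path with any other sparsification path that coincides with it up to scale $\ell$, hence sits at the same intermediate skeleton $\SK_\ell$, the step-$(\ell+1)$ error increase is exactly the cardinality of the removed impact set. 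It then suffices to prove that Algorithm~\ref{alg:skelcomp} removes, among all admissible removal sets of the prescribed size $s=\min(r,|\SK_\ell|)$, one of minimal $|I_{\ell,\cdot}|$.

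Next I would establish the key structural fact that the single-point impact sets $I_{\ell,\{i\}}$, $i\in\SK_\ell$, are pairwise disjoint subsets of $O_\ell$: by Definition~\ref{def:impact} a point $\bm q\in I_{\ell,\{i\}}$ is covered by $B_{D(i)}(i)$ and by no disk $B_{D(j)}(j)$ with $j\in\SK_\ell\setminus\{i\}$, so it cannot also lie in $I_{\ell,\{j\}}$ for $j\neq i$. Since every point of $O_\ell=\RO(\SK_\ell)$ lies in at least one of the disks, this yields $I_{\ell,S}\supseteq\bigcup_{i\in S}I_{\ell,\{i\}}$ and hence $|I_{\ell,S}|\geq\sum_{i\in S}|I_{\ell,\{i\}}|$ for every $S\subseteq\SK_\ell$. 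For $r=1$ the claim follows at once: the removed point $c_1=\textnormal{argmin}_i|I_{\ell,\{i\}}|$ attains the minimum of $|I_{\ell,\{c\}}|$ over all single-point removals. For $r>1$ one additionally needs that the impact of the greedily chosen set collapses to the sum of individual impacts, $|I_{\ell,P_{\ell+1}}|=\sum_{j=1}^{s}|I_{\ell,\{c_j\}}|$, so that the sorting step of Algorithm~\ref{alg:skelcomp}, which minimises $\sum_{i\in S}|I_{\ell,\{i\}}|$ over size-$s$ subsets, also minimises the true set impact.

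The main obstacle is precisely this last collapse. In general $I_{\ell,S}$ can be strictly larger than $\bigcup_{i\in S}I_{\ell,\{i\}}$: if a region of $O_\ell$ is covered by several disks whose centres all lie in $S$ but by no disk with centre outside $S$, it is lost when $S$ is removed yet belongs to no individual impact set. Consequently, the clean cases are $r=1$, where the identity and the disjointness lemma close the argument in one inequality, and, for general $r$, optimality with respect to the surrogate objective $\sum_{i\in S}|I_{\ell,\{i\}}|$ actually minimised by the algorithm, which by the disjointness lemma is a genuine lower bound for the error increase. A fully general statement for the true increase $\mathcal{E}_{\ell+1}-\mathcal{E}_\ell$ under batched removal would require an additional hypothesis excluding such jointly-covered interior regions, or a restatement of the proposition as a greedy/local-optimality guarantee; I would therefore prove the $r=1$ version directly and present the batched case in the surrogate form.
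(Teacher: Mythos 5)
Your reduction to the one-step identity $\mathcal{E}_{\ell+1}-\mathcal{E}_\ell = |O_\ell|-|O_{\ell+1}| = |I_{\ell,P_{\ell+1}}|$ is exactly the paper's starting point; the paper then simply asserts that Algorithm~\ref{alg:skelcomp} minimises $|I_{\ell,P_{\ell+1}}|$ over all possible choices of $P_{\ell+1}$ and stops. Your version is more careful and, for $r=1$, more complete: the pairwise disjointness of the singleton impacts is correct (a point of $O_\ell$ in $I_{\ell,\{i\}}$ is covered by $B_{D(i)}(i)$ and by no other disk of $\SK_\ell$, so it cannot also lie in $I_{\ell,\{j\}}$ for $j\neq i$), and it turns the argmin over singletons into genuine optimality among all single-point removals. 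The obstacle you flag for $r>1$ is real, and it is a gap in the paper's own one-line argument rather than in yours: the sorting step minimises the surrogate $\sum_{i\in S}|I_{\ell,\{i\}}|$, which by disjointness only lower-bounds $|I_{\ell,S}|$, since a region of $O_\ell$ covered exclusively by disks centred inside $S$ contributes to $|I_{\ell,S}|$ but to no singleton impact. So for batched removal the proposition is justified only in the surrogate sense (or for $r=1$, or if the impacts were recomputed after each of the $s$ removals, which the algorithm does not do). Your proposal to prove $r=1$ directly and state the batched case as greedy/surrogate optimality is the honest resolution; your further observation that the $\ell$-th increment is only comparable across paths that agree up to scale $\ell$ is likewise a point the paper leaves implicit.
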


According to Eq.~\eqref{eq:removeimpact}, we have $\abs{O_{\ell+1}}=\abs{O_{\ell}}-\abs{I_{\ell,P_{\ell+1}}}$. Moreover, Algorithm~\ref{alg:skelcomp} selects the next step $P_{\ell+1}$ of the sparsification path such that $\abs{I_{P_{\ell+1}}}$ is minimised over all possible choices of $P_{\ell+1}$. Therefore, each step yields the minimal possible increase in reconstruction error.

Additionally, we want a small number of skeleton points to represent a large 
number of object points. This is reflected by the skeleton minimality from 
Section~\ref{sec:metrics}. It also constitutes a Lyapunov sequence for our 
compression scale-spaces.

\begin{proposition}[Skeleton Minimality is a Lyapunov Sequence]
	Skeleton minimality decreases with increasing scale $\ell$, i.e. $\mathcal{M}_\ell \geq \mathcal{M}_{\ell+1}$.
\end{proposition}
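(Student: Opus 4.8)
The goal is to show $\mathcal{M}_\ell = |\SK_\ell|/|O_\ell| \geq |\SK_{\ell+1}|/|O_{\ell+1}| = \mathcal{M}_{\ell+1}$. The natural approach is to reduce this to a comparison of how much the numerator and denominator each shrink in one step of the compression path. Write $p := |P_{\ell+1}|$ for the number of skeleton points removed (so $|\SK_{\ell+1}| = |\SK_\ell| - p$) and $q := |I_{\ell,P_{\ell+1}}|$ for the number of object points lost (so by Eq.~\eqref{eq:removeimpact}, $|O_{\ell+1}| = |O_\ell| - q$). The claimed inequality $\tfrac{|\SK_\ell|}{|O_\ell|} \geq \tfrac{|\SK_\ell| - p}{|O_\ell| - q}$ is, after clearing the positive denominators, equivalent to $|\SK_\ell|(|O_\ell| - q) \geq (|\SK_\ell| - p)|O_\ell|$, i.e. to $p \cdot |O_\ell| \geq q \cdot |\SK_\ell|$, i.e. to the "local" comparison $\tfrac{p}{|\SK_\ell|} \geq \tfrac{q}{|O_\ell|}$: the fraction of skeleton points removed is at least the fraction of object points lost.

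The key step is therefore to establish $q \cdot |\SK_\ell| \leq p \cdot |O_\ell|$, which I would prove via an averaging/counting argument on reconstruction impact. Each removed point $c_i \in P_{\ell+1}$ contributes its singleton impact $I_{\ell,c_i}$, and by subadditivity of the impact functional (the union bound in Definition~\ref{def:impact}, since $I_{\ell,S}\subseteq\bigcup_{i\in S}I_{\ell,i}$) we get $q = |I_{\ell,P_{\ell+1}}| \leq \sum_{c_i\in P_{\ell+1}} |I_{\ell,c_i}|$. Because Algorithm~\ref{alg:skelcomp} picks the $p$ smallest singleton impacts, each term $|I_{\ell,c_i}|$ for $c_i\in P_{\ell+1}$ is at most the average singleton impact over all of $\SK_\ell$, which is at most $|O_\ell|/|\SK_\ell|$ (since $\sum_{c\in\SK_\ell}|I_{\ell,c}| \le |O_\ell|$ is false in general — impacts can overlap — but the $p$ chosen ones are the $p$ smallest, so their sum is at most $\tfrac{p}{|\SK_\ell|}\sum_{c\in\SK_\ell}|I_{\ell,c}|$; still not bounded cleanly). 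This is where the main obstacle lies.

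The main obstacle is that singleton impacts need not sum to at most $|O_\ell|$, so the naive averaging bound can fail; I expect the clean argument to instead use that every object point in $O_\ell$ is covered by \emph{at least one} remaining skeleton disk — precisely, $O_\ell = \bigcup_{c\in\SK_\ell} B_{D(c)}(c)$ — so that removing the $p$ smallest-impact points can strip away at most a $p/|\SK_\ell|$ fraction of $O_\ell$ only if impacts are ``spread evenly''. The cleanest route, which I would pursue, is: (i) if the algorithm empties the skeleton in this step ($P_{\ell+1} = \SK_\ell$), then $\SK_{\ell+1}=\emptyset$, $O_{\ell+1}=\emptyset$, and one checks the limiting/degenerate case of $\mathcal{M}$ directly (or notes $\mathcal{M}_{\ell+1}$ is undefined/zero by convention and the statement holds trivially); (ii) otherwise $p = r < |\SK_\ell|$ and I compare against the worst competing path, using that the chosen $P_{\ell+1}$ minimises $|I_{\ell,P_{\ell+1}}|$ over all $p$-subsets — in particular $q \le |I_{\ell,P'}|$ for the $p$-subset $P'$ realising the smallest \emph{total} impact, and a pigeonhole over the $|\SK_\ell|$ choices of singletons gives $\min_{c}|I_{\ell,c}| \le |O_\ell|/|\SK_\ell|$, hence $q \le p\cdot\min_c|I_{\ell,c}| \le p\,|O_\ell|/|\SK_\ell|$, which is exactly $q|\SK_\ell| \le p|O_\ell|$. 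Substituting back into the cleared inequality completes the proof; I would flag that the edge case where $|O_\ell|$ or $|\SK_\ell|$ vanishes needs a one-line convention to keep $\mathcal{M}$ well-defined.
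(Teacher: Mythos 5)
Your reduction is exactly the paper's: clearing denominators in $\mathcal{M}_\ell \geq \mathcal{M}_{\ell+1}$ leads to $|P_{\ell+1}|\cdot|O_\ell| \geq |I_{\ell,P_{\ell+1}}|\cdot|\SK_\ell|$, which (using $|O_\ell|=|O_{\ell+1}|+|I_{\ell,P_{\ell+1}}|$ and $|\SK_\ell|=|\SK_{\ell+1}|+|P_{\ell+1}|$) is the same quantity as the cross-multiplied inequality $|O_{\ell+1}|\cdot|P_{\ell+1}| \geq |I_{\ell,P_{\ell+1}}|\cdot|\SK_{\ell+1}|$ that the paper derives from Eq.~\eqref{eq:ml} and Eq.~\eqref{eq:avgarea}. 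The gap lies in how you try to establish it. The reconstruction impact of Definition~\ref{def:impact} is \emph{superadditive}, not subadditive: one only has $\bigcup_{c\in S} I_{\ell,c} \subseteq I_{\ell,S}$, and the inclusion can be strict, because $I_{\ell,S}$ also contains object points that are covered \emph{only} by disks of several members of $S$ jointly and hence belong to no singleton impact. Consequently your inequality $|I_{\ell,P_{\ell+1}}| \leq \sum_{c\in P_{\ell+1}}|I_{\ell,c}|$, and a fortiori the pivotal bound $q \leq p\cdot\min_{c}|I_{\ell,c}|$, is false in general: if two skeleton points have (nearly) coinciding disks covering a region touched by no other disk, each has singleton impact (close to) zero, so Algorithm~\ref{alg:skelcomp} with $r\geq 2$ removes both in one step, yet their joint impact is the whole region, while $p\cdot\min_c|I_{\ell,c}|$ is (close to) zero. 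Two side remarks: your parenthetical ``impacts can overlap'' is mistaken --- distinct singleton impacts are pairwise disjoint subsets of $O_\ell$, since a point of $I_{\ell,c}$ lies in no disk other than $B_{D(c)}(c)$; hence $\sum_{c\in\SK_\ell}|I_{\ell,c}| \leq |O_\ell|$ and your pigeonhole bound $\min_c|I_{\ell,c}|\leq |O_\ell|/|\SK_\ell|$ is actually fine. Also, Algorithm~\ref{alg:skelcomp} does not minimise the group impact $|I_{\ell,P}|$ over all $p$-element subsets; it selects the $p$ smallest \emph{singleton} impacts, so you cannot invoke such an optimality either.

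For $r=1$ your argument is complete, since then $q=\min_c|I_{\ell,c}|\leq|O_\ell|/|\SK_\ell|$ gives exactly $q|\SK_\ell|\leq p|O_\ell|$ (and the degenerate last step $\SK_{\ell+1}=\emptyset$ is handled by your convention). For $r>1$, however, the missing piece is precisely what the paper asserts instead of singleton bookkeeping: a comparison of \emph{group} impacts on average, namely $|I_{\ell,P_{\ell+1}}|/|P_{\ell+1}| \leq |I_{\ell,\SK_{\ell+1}}|/|\SK_{\ell+1}| \leq |O_{\ell+1}|/|\SK_{\ell+1}|$, where the second inequality follows from $I_{\ell,\SK_{\ell+1}}\subseteq O_{\ell+1}$ and the first is attributed to the sorting in Algorithm~\ref{alg:skelcomp}; cross-multiplying this is exactly your target inequality. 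A repaired version of your proof must therefore relate the \emph{joint} impact of the removed low-ranked points to the coverage of the surviving points (as in the paper's averaged comparison), rather than bounding the joint impact by sums or multiples of singleton impacts, which the superadditivity of $I_{\ell,\cdot}$ does not permit.
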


\begin{proof}
	First, we decompose $\mathcal{M}_\ell$ according to the sparsification path from Definition~\ref{def:skelspace} and the reconstruction impact from Eq.~\eqref{eq:removeimpact}. This yields
	\begin{equation}
		\label{eq:ml}
		\mathcal{M}_\ell = \frac{\abs{\SK_\ell}}{\abs{O_\ell}} = 
		\frac{\abs{\SK_{\ell+1}}+\abs{P_{\ell+1}}}{\abs{O_{\ell+1}}+\abs{I_{\ell, P_{\ell+1}}}} 
		\, .
	\end{equation}	
	Furthermore, due to the sorting in Algorithm~\ref{alg:skelcomp}, the average unique area of the removed points in $P_{\ell+1}$ is smaller or equal to the average unique area of the remaining skeleton points in $\SK_{\ell+1}$, i.e.
	\begin{equation}
		\label{eq:avgarea}
		\frac{\abs{I_{\ell, P_{\ell+1}}}}{\abs{P_{\ell+1}}} \leq \frac{\abs{I_{\ell, \SK_{\ell+1}}}}{\abs{\SK_{\ell+1}}} \leq \frac{\abs{O_{\ell+1}}}{\abs{\SK_{\ell+1}}} \, .
	\end{equation}
	Combining both Eq.~\eqref{eq:ml} and  Eq.~\eqref{eq:avgarea}, we can show our claim by
	
	\begin{align}
	&|O_{\ell+1}| \ge \frac{|I_{\ell,P_{\ell+1}}|\,|\SK_{\ell+1}|}{|P_{\ell+1}|} \\
	\iff\,& |O_{\ell+1}|\,|P_{\ell+1}| \ge |I_{\ell,P_{\ell+1}}|\,|\SK_{\ell+1}| \label{eq:step1}\\
	\iff\,& |\SK_{\ell+1}|\,|O_{\ell+1}| + |O_{\ell+1}|\,|P_{\ell+1}|
	\ge |I_{\ell,P_{\ell+1}}|\,|\SK_{\ell+1}| + |\SK_{\ell+1}|\,|O_{\ell+1}| \label{eq:step2}\\
	\iff\,& |O_{\ell+1}|\left(|\SK_{\ell+1}| + |P_{\ell+1}|\right)
	\ge |\SK_{\ell+1}|\left(|I_{\ell,P_{\ell+1}}| + |O_{\ell+1}|\right) \label{eq:step2b}\\
	\iff\,& \frac{|\SK_{\ell+1}|+|P_{\ell+1}|}{|O_{\ell+1}|+|I_{\ell,P_{\ell+1}}|}
	\ge\, \frac{|\SK_{\ell+1}|}{|O_{\ell+1}|} \label{eq:step3}\\
	\overset{\textnormal{\eqref{eq:ml}}}{\iff}& \mathcal{M}_\ell \ge \mathcal{M}_{\ell+1}. \label{eq:step4}
	\end{align}

\end{proof}

\subsubsection{A Note on Densification for Compression}

Note that we have also implemented a densification alternative to this compression scale-space. The algorithm starts with an empty skeleton and greedily adds the skeleton point in every step that has the largest unique impact on the reconstructed object. This yields a similar trade-off between skeleton size and reconstruction error as the sparsification. Quality-wise, both approaches are on par.

However, in this case, densification is inherently a worse choice due to its prohibitive runtime. We implement both sparsification and densification with caching techniques that avoid costly re-computations of the unique influence as much as possible. However, there is an inherent difference between both approaches: In the top-down sparsification approach, we benefit from the fact that in the beginning when the skeleton is highly populated, the local influence is small. As it begins to increase, the number of candidates that need to be updated shrinks. 

In densification, this relationship is reversed. It is much more costly to update the influence of many potential candidates when the skeleton is still sparse and the influence area large. We have observed multiple orders of magnitude higher runtimes for densification in our experiments. Therefore, we do not recommend it in practice and exclude it from our more detailed evaluations.

\begin{figure}[ht]
	\small
	\tabcolsep0pt
	
	\begin{center}
		\begin{tabular}{cccc}
			$1074$ points & $933$ points & $602$ points & $189$ points\\
			\hline
			\multicolumn{4}{c}{(a) random sparsification}\\
			\hline
			\includegraphics[width=0.245\textwidth]{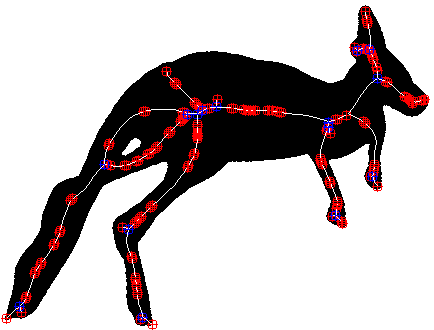} &
			\includegraphics[width=0.245\textwidth]{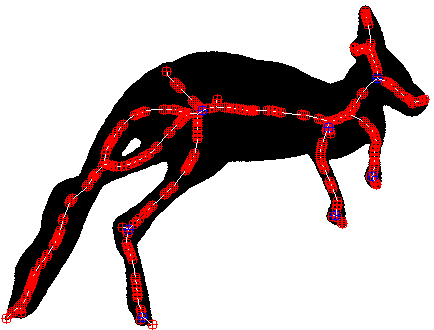} &
			\includegraphics[width=0.245\textwidth]{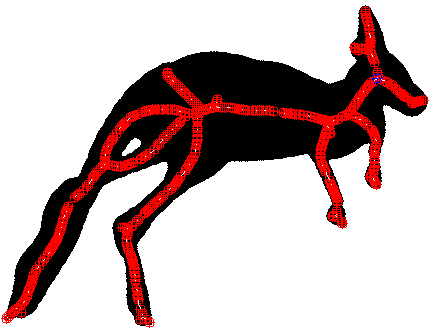} &
			\includegraphics[width=0.245\textwidth]{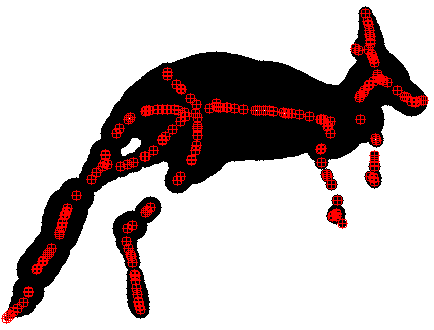} \\
			$\mathcal{C}=157$  & $\mathcal{C}=324$ & $\mathcal{C}=439$ & $\mathcal{C}=184$ \\
			$\mathcal{E}=1883$  & $\mathcal{E}=1948$ & $\mathcal{E}=2208$& $\mathcal{E}=3948$\\
			\hline
			
			\multicolumn{4}{c}{(b) branch pruning}\\
			\hline
			\includegraphics[width=0.245\textwidth]{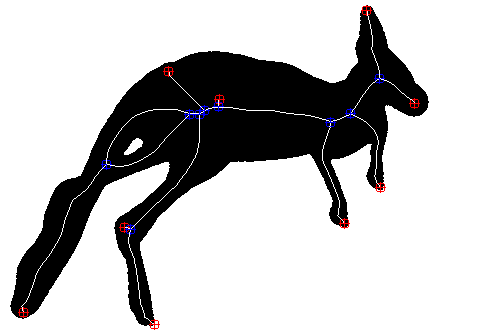} &
			\includegraphics[width=0.245\textwidth]{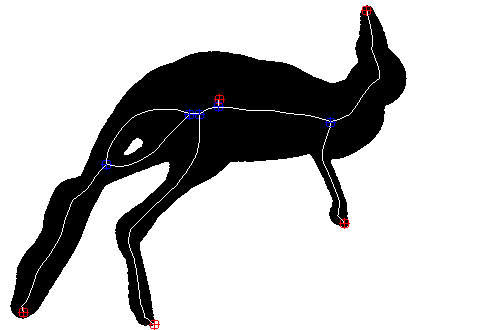} &
			\includegraphics[width=0.245\textwidth]{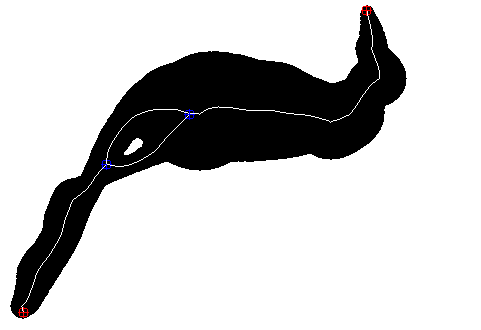} &
			\includegraphics[width=0.245\textwidth]{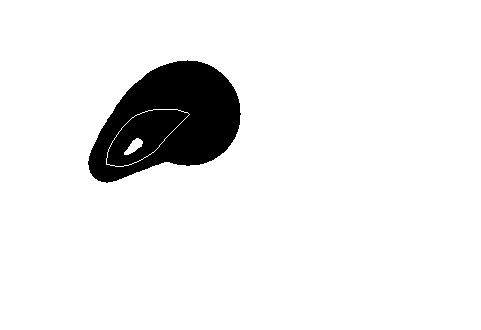} \\
			$\mathcal{C}=18$  & $\mathcal{C}=10$  &  $\mathcal{C}=4$ &  $\mathcal{C}=0$\\
			$\mathcal{E}=1979$  & $\mathcal{E}=3820$ & $\mathcal{E}=9235$& $\mathcal{E}=31702$\\
			\hline

			\multicolumn{4}{c}{(c) compression}\\
			\hline
			\includegraphics[width=0.245\textwidth]{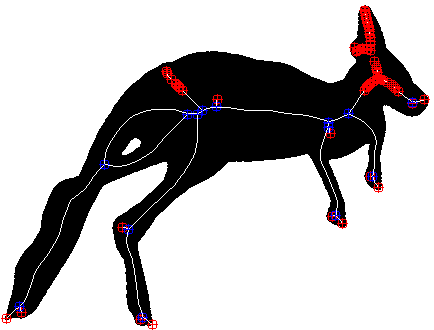} &
			\includegraphics[width=0.245\textwidth]{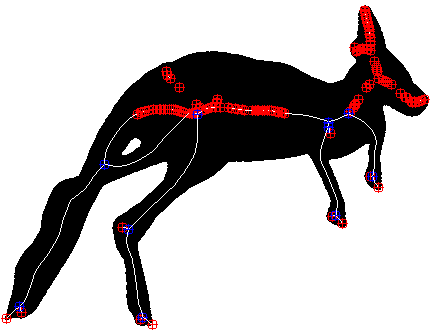} &
			\includegraphics[width=0.245\textwidth]{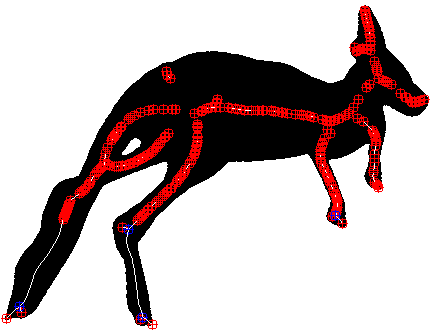} &
			\includegraphics[width=0.245\textwidth]{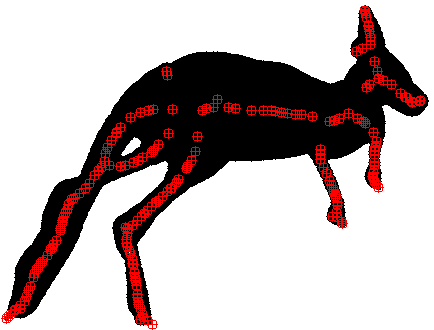} \\
			$\mathcal{C}=76$  & $\mathcal{C}=126$ & $\mathcal{C}=260$ & $\mathcal{C}=188$ \\
			$\mathcal{E}=1825$  & $\mathcal{E}=1825$ & $\mathcal{E}=1825$& $\mathcal{E}=2080$\\
		\end{tabular}
	\end{center}
	\caption{\textbf{Comparison of sparsification paths} on image 69020 of BSDS500 \cite{AMFM11} grouped by branch pruning steps. Endpoints are marked in red, branch points in blue. $\mathcal{E}$ denotes the reconstruction error and $\mathcal{C}$ skeleton complexity. Branch pruning preserves homotopy and reduces complexity at the cost of a higher reconstruction error compared to random or compressive sparsification.\label{fig:branch}}
\end{figure}

\subsection{Comparative Evaluation of Sparsification}

In the following, we compare our branch pruning and compression scale-spaces against the random sparsification baseline and investigate if they fulfil the goals we set for each application. 

\subsubsection{Branch Pruning}

Fig.~\ref{fig:branch} shows a visual example of branch pruning contrasted to random and compressive sparsification. As the example confirms, the topology is preserved as intended. The \emph{kangaroo} object has a hole in it and therefore, the skeleton contains a loop. Since we aim to reduce skeleton complexity, branches with endpoints are prioritised in the pruning: If only a single branch connected to a loop  is left, removing the branch reduces complexity to zero while removing the loop turns one branching point into an end point and keeps minimality constant at two. Both steps would adhere to skeleton minimality as the Lyapunov sequence, but we try to greedily decrease complexity. Thus, as expected, the loop is the last surviving structure of the skeleton. Other branches are pruned one by one according to their impact on the reconstruction. Even with only two branches left, a large part of the shape is retained. 

A clear trade-off can be seen in this example. The preservation of homotopy comes at the price of a higher reconstruction error compared to the random or compression sparsification. On the other hand, skeleton complexity is significantly lower for the branch pruning. For applications like shape matching, these metrics are much more important than an accurate reconstruction of the original shape. Therefore, our scale-space is indeed task-adaptive.

\begin{figure}[p]
	\small
	\tabcolsep0pt
	
	\begin{center}
		\begin{tabular}{cccc}
			$1144$ points & $982$ points & $820$ points & $658$ points\\
			\multicolumn{4}{c}{(a) random sparsification}\\
			\hline
			\includegraphics[width=0.245\textwidth]{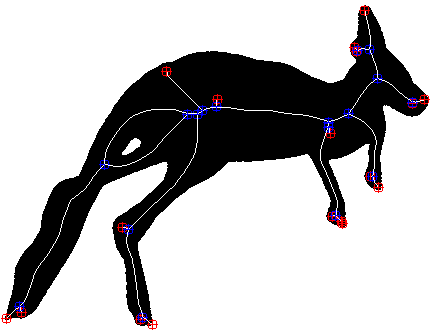} &
			\includegraphics[width=0.245\textwidth]{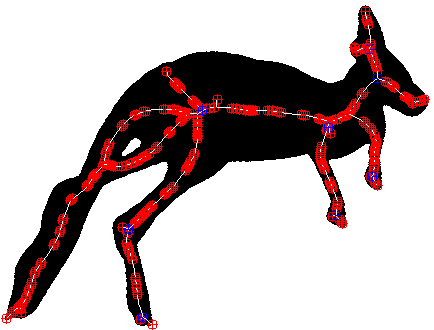} &
			\includegraphics[width=0.245\textwidth]{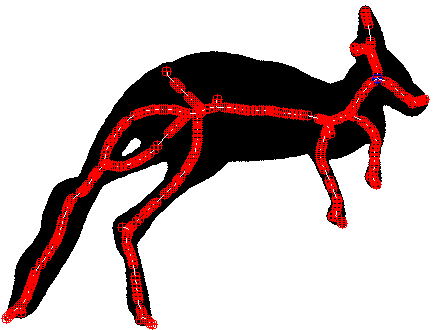} &
			\includegraphics[width=0.245\textwidth]{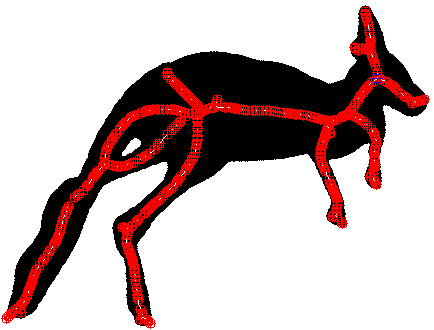} \\
			$\mathcal{E}=1825$  & $\mathcal{E}=1917$ & $\mathcal{E}=1999$& $\mathcal{E}=2149$\\
			\multicolumn{4}{c}{(b) compression}\\
			\hline
			\includegraphics[width=0.245\textwidth]{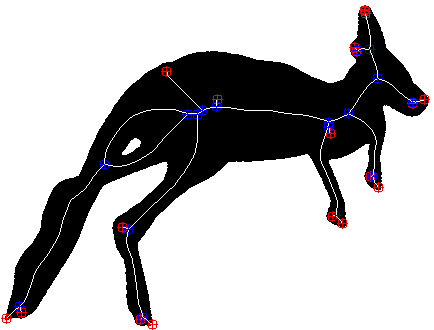} &
			\includegraphics[width=0.245\textwidth]{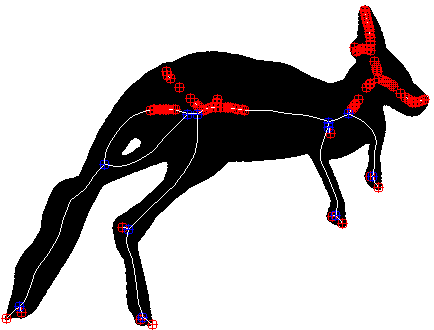} &
			\includegraphics[width=0.245\textwidth]{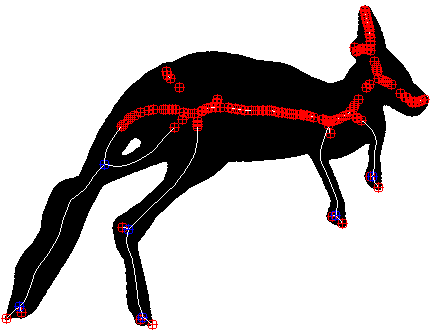} &
			\includegraphics[width=0.245\textwidth]{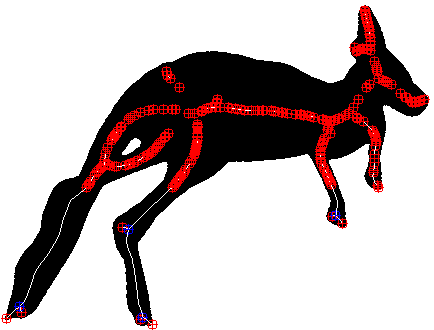} \\
			$\mathcal{E}=1825$  & $\mathcal{E}=1825$ & $\mathcal{E}=1825$& $\mathcal{E}=1825$\\
			
			\hline
			
			$496$ points & $335$ points & $173$ points & $10$ points\\
			\multicolumn{4}{c}{(a) random sparsification}\\
			\hline
			\includegraphics[width=0.245\textwidth]{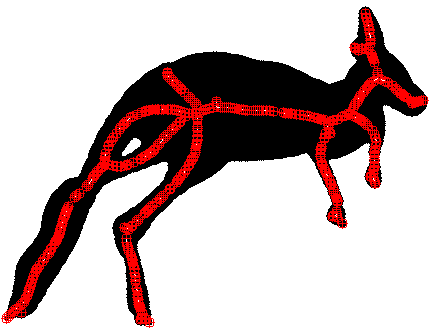} &
			\includegraphics[width=0.245\textwidth]{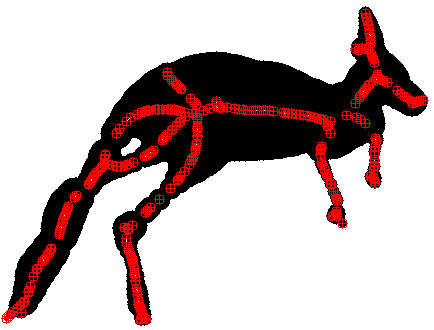} &
			\includegraphics[width=0.245\textwidth]{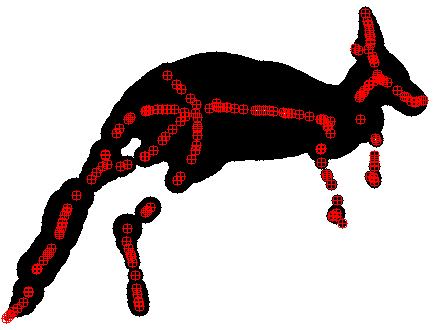} &
			\includegraphics[width=0.245\textwidth]{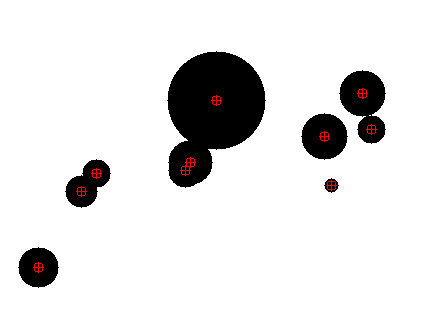} \\
			$\mathcal{E}=2337$  & $\mathcal{E}=2798$ & $\mathcal{E}=4201$& $\mathcal{E}=28266$\\
			
			\multicolumn{4}{c}{(b) compression}\\
			\hline
			\includegraphics[width=0.245\textwidth]{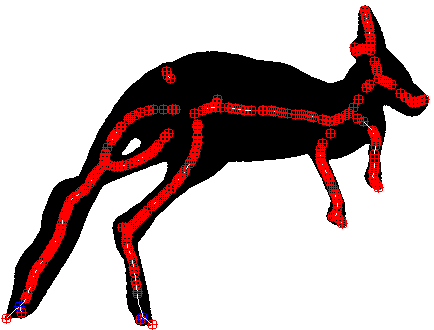} &
			\includegraphics[width=0.245\textwidth]{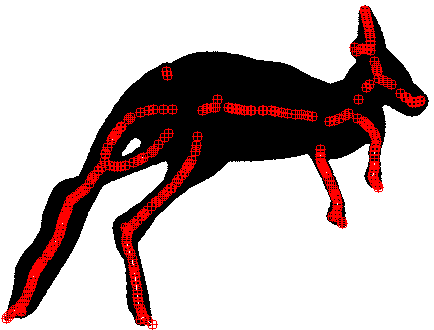} &
			\includegraphics[width=0.245\textwidth]{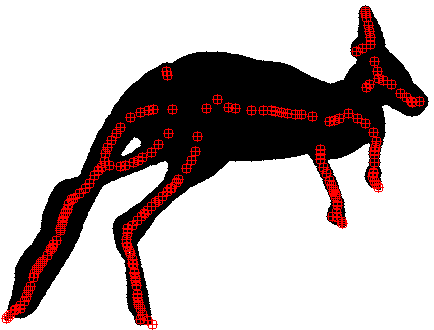} &
			\includegraphics[width=0.245\textwidth]{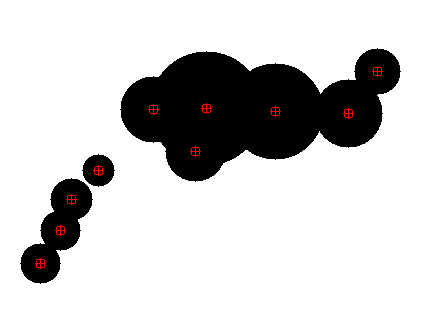} \\
			$\mathcal{E}=1825$  & $\mathcal{E}=1895$ & $\mathcal{E}=2117$& $\mathcal{E}=16574$\\
			\hline
			
		\end{tabular}

	\end{center}
	\caption{\textbf{Comparison of sparsification paths} on image 69020 of BSDS500 \cite{AMFM11} with equidistant scale increase. Compared to random sparsification, the compression scale-space removes only redundant skeleton points first and thus does not increase the reconstruction error $\mathcal{E}$ for the first five scales displayed. In general, it offers a quality improvement of up to 50\%.\label{fig:compression}}
\end{figure}

\subsubsection{Compression}

In Fig.~\ref{fig:branch} and Fig.~\ref{fig:compression}, we see that our compression scale-space outperforms both the sparsification baseline and the branch pruning in terms of reconstruction quality. Here, we are only interested in getting the best trade-off between the number of skeleton points that we need to store and the reconstruction error. Lifting the homotopy requirement and prioritising skeleton minimality over skeleton complexity allows superior performance in this regard. This is also confirmed by the quantitative results in Fig.~\ref{fig:quant}. We can see that compression scale-space lead to an error reduction of up to 80\% compared to branch pruning and up to 50\% compared to the baseline at the same skeleton size.
\begin{figure}[ht]
	\small
	\tabcolsep4pt
	\begin{center}
		\begin{tabular}{cc}
			\includegraphics[height=0.345\textwidth]{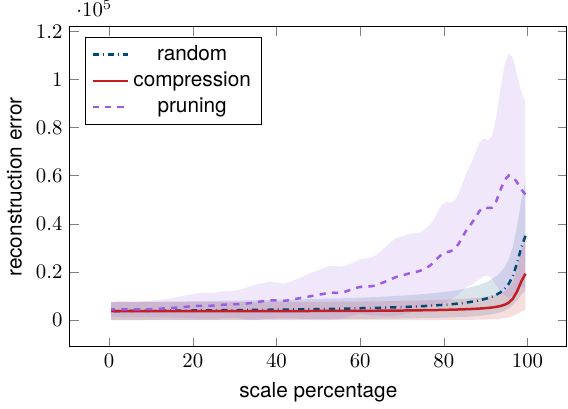} &
			\includegraphics[height=0.325\textwidth]{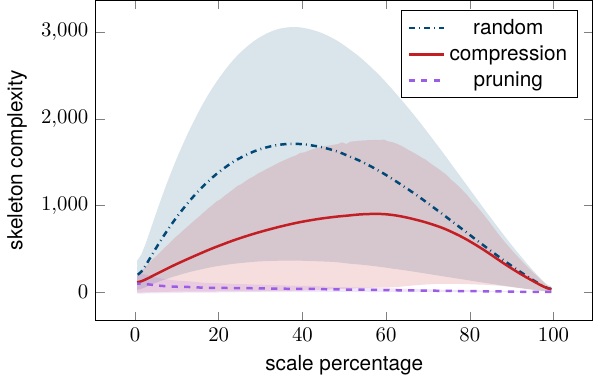} \\
			(a) error & (b) complexity\\
		\end{tabular}

	\end{center}
	\caption{\textbf{Quantitative evaluation} on the 1400 images of the CE-Shape-1 database \cite{latecki2000}. (a) Branch pruning has the highest error due to its preservation of homotopy and reduction of complexity. Compression minimises the reconstruction error and thus performs best. (b) Branch pruning performs best in terms of complexity since it minimises it greedily. Compression implicitly limits the improve in complexity as well since it removes the least significant points first which often form connected branches in the skeleton. The coloured shading represents the $\pm 1$ standard deviation band. All methods us $100$ bins. Due to removal of whole branches these are unevenly populated for pruning. We reduce the noise with a Gaussian filter ($\sigma=1.5$) applied to all methods.\label{fig:quant}}
\end{figure}

\subsubsection{Quantitative Evaluation}

We evaluate reconstruction error $\mathcal{E}$ and skeleton complexity 
$\mathcal{C}$ for our three sparsification scale-spaces on the CE-Shape-1 
database \cite{latecki2000}. Since the objects and their associated skeletons 
have highly variable sizes, the number of discrete steps in scale-space varies 
as well. Therefore, we cannot average errors over the discrete scale. Instead, 
we normalise scale to a $0-100$ percentage scale.

The evolution of the error in Fig.~\ref{fig:quant}(a) is consistent with the visual evaluation from the previous sections. Branch pruning trades homotopy and monotonically decreasing skeleton complexity for a relatively high reconstruction error. In contrast, the greedy optimisation of our compression scale-space guarantees the minimum increase in the error for every step in scale-space. This is verified by its quantitative performance. It is particularly effective for very coarse scales, where the choices of the preserved skeleton points matter most.

In the case of pruning, the graph of the skeleton complexity in Fig.~\ref{fig:quant}(b) verifies the expectations that we have for a Lyapunov sequence. It is a meaningful measure for shape matching and declines monotonically. Both compression and random sparsification feature an initial increase in complexity followed by a decline for coarse scales. Since they are not homotopy-preserving, both of these scale-spaces disconnect the skeleton early on, thus creating new end points. However, random sparsification does this without distinction of skeleton points, thus leading to a slightly distorted bell curve. The increase and decline of complexity for compression however is linear. This arises from the fact that there is a certain correlation between reconstruction impact and skeleton complexity: Compression removes the skeleton points with the least impact first. Neighbouring skeleton points often have similar disk radii as well, thus their reconstruction disks are heavily overlapping. Therefore, if a skeleton point is insignificant, there is a high chance of its neighbours also being insignificant. Thus, it is likely that whole connected branches are removed first.

Overall, the quantitative evaluation supports our visual observations.


\subsection{Stiffness Enhancement with Extended Densification Scale-Spaces}
\label{sec:stiffexp}

\begin{algorithm2e}[t]
	\caption{Stiffness Enhancement \label{alg:stiff}}
	\tcc{Initialisation}
	$\ell \gets 0$\;
	$A_0 \gets O$\;
	$\SE_0 \gets \SK$\;
	\While{$\abs{\SE_\ell} < \abs{O}$}{
		\tcc{Strengthen current skeleton by dilation.}
		\tcc{$\mathcal{N}(\bm x)$: 8-neighbourhood of $\bm x$}
		$\hat{\SE_\ell} \gets \SE_\ell \cup \{\mathcal{N}(\bm x) \, \vertL \, \bm x \in \SE_\ell\}$\;
		\tcc{Update auxiliary shape by subtracting dilated skeleton.}
		$A_{\ell+1} \gets A_\ell \setminus \hat{\SE_\ell}$\;
		\tcc{Add skeleton of current auxiliary shape $A_\ell$ to evolving generalised skeleton.}
		$\SE_{\ell+1} \gets \hat{\SE_\ell} \cup \ST{A_\ell}$\;
		$\ell \gets \ell+1$\; 	
	}
\end{algorithm2e}

In the following, we demonstrate that our extended densification from Section~\ref{sec:densext} allows us to perform stiffening \cite{BBYBP20}. Given the outline of an object to be printed with a 3-D printer, an iterative skeletonisation can improve the stability of the printed object. They have validated that such an approach actually improves the elastic stress properties. 

Our highly generalised Definition~\ref{def:extpath} is flexible enough to cover applications that go far beyond a standard skeletonisation. In Algorithm~\ref{alg:stiff}, our first iteration simply adds the original skeleton of the shape to our extended evolving skeleton set $\SE_\ell$. This forms the basis for the printed structures. Then, in every step, we first strengthen the pre-existing structure by a one pixel dilation: We thicken the skeleton by adding all direct non-skeleton neighbours. This is a step that highlights the generalisation capabilities of our framework: We can intentionally sacrifice the thin skeleton property in order to adopt the stiffness enhancement idea that relies on thickening main support structures. Then, we subtract this skeleton from the object, creating an evolving auxiliary object $A_\ell$. To this auxiliary object, we apply the skeletonisation algorithm again. By repeating these steps, in the transition from every coarse to the next finer scale, we strengthen the existing support structures and add new thin support beams given by the newly computed skeleton.

\begin{figure}[t]
	\small
	\tabcolsep0pt
	
	\begin{center}
		\begin{tabular}{cccc}
			\multicolumn{4}{c}{(a) bat-1}\\
			\hline
			\includegraphics[width=0.245\textwidth]{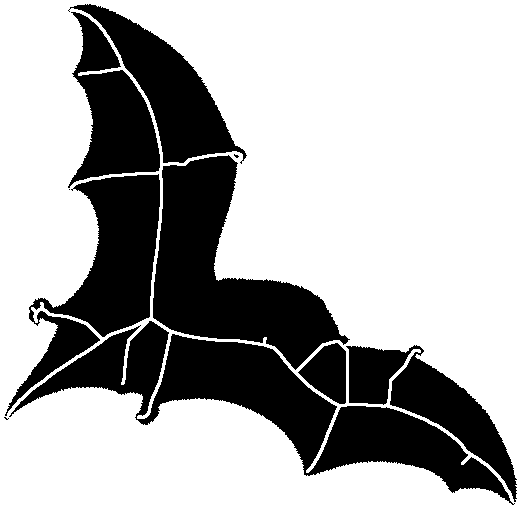} &
			\includegraphics[width=0.245\textwidth]{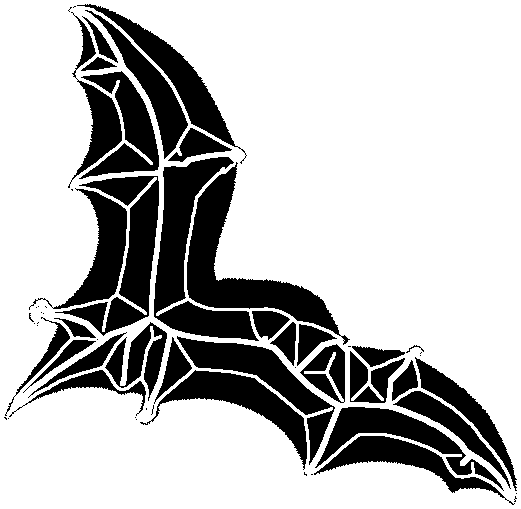} &
			\includegraphics[width=0.245\textwidth]{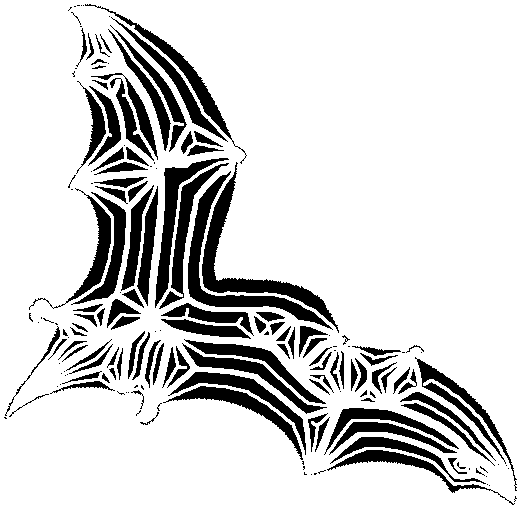} &
			\includegraphics[width=0.245\textwidth]{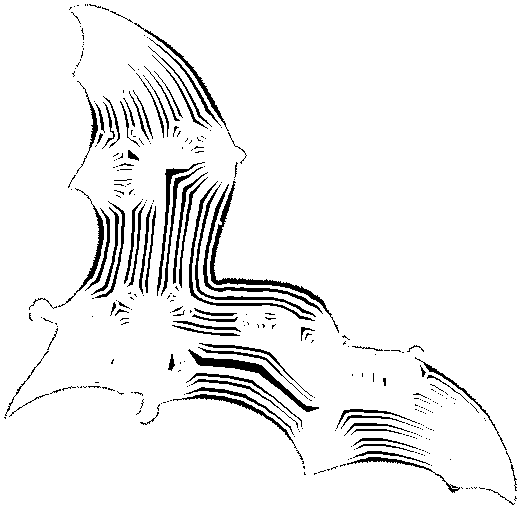} \\
			\hline
		& & & \\
			
			\multicolumn{4}{c}{(a) device8-3}\\
			\hline
			\includegraphics[width=0.245\textwidth]{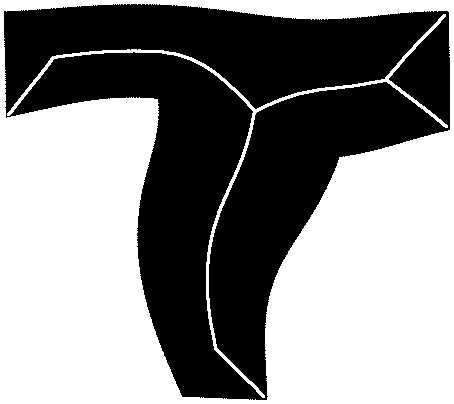} &
			\includegraphics[width=0.245\textwidth]{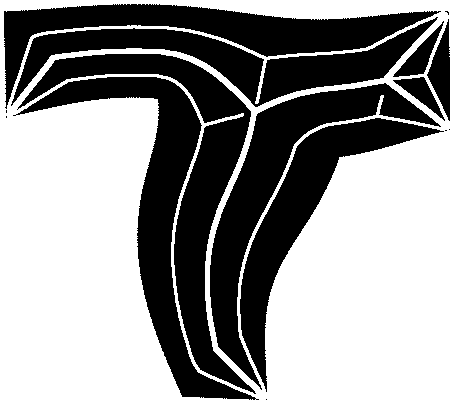} &
			\includegraphics[width=0.245\textwidth]{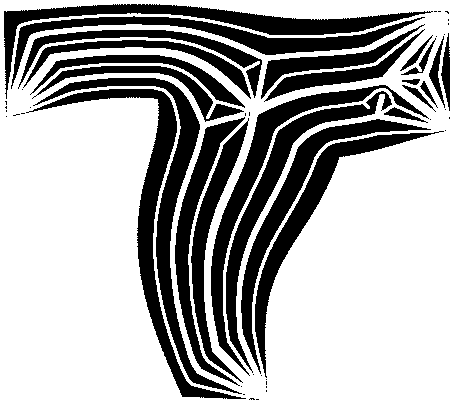} &
			\includegraphics[width=0.245\textwidth]{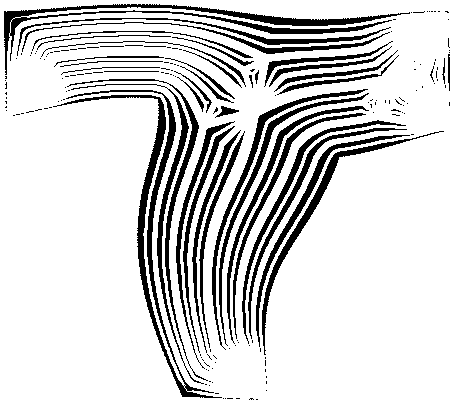} \\
			\hline		
		\end{tabular}

	\end{center}
	\caption{\textbf{Stiffness enhancement} on two images of the CE-Shape-1 database \cite{latecki2000}. Each finer scale ads more branches to the extended skeleton and thickens the previous skeleton.\label{fig:stiff}}
\end{figure}

Fig.~\ref{fig:stiff} shows two examples of stiffness enhancement. As intended, we populate the extended skeleton with new support branches and iteratively strengthen already existing branches.

\afterpage{\FloatBarrier}

\section{Conclusions and Outlook}
\label{sec:conclusion}

We leverage a combination of data optimisation strategies \cite{CPW19,Pe21,MHWT12,HMHW17,DDI06,CW21,DAW21,KBPW18} and the medial axis transform \cite{Bl67} to establish a new class of scale-space frameworks for shape analysis. With only a few requirements, they offer a broad range of architectural and simplification properties. Compared to many other scale-spaces, they have no ill-posed direction: sparsification and densification scale-spaces are equally well-posed counterparts that allow two approaches for transitioning between coarse and fine shape representations in opposite directions. Together with the sparsification and densification paths as flexible design tools, our framework allows a wide range of practical applications. With proof-of-concept examples from the areas of skeleton post-processing, shape compression, and 3-D printing, we have demonstrated how additional requirements for the scale-space paths yield task-specific theoretical guarantees.

In the future, we plan to explore further practical applications. In particular, we want to investigate scale-spaces for compression that also take coding cost into account, similar to quantisation scale-spaces \cite{Pe21}.

\backmatter

%
%
%
%
%
%

\section*{Declarations}

\subsection{Author Contribution} 

Both authors have contributed equally to both the theoretical content and practical implementations.

\subsection{Competing Interests}

No competing interests exist. 
%
%
%
%
%
%
%
%
%
%

\begin{appendices}

\section{Maximum Disk Thinning}
\label{app:maxdisc}

For the discrete experiments in this paper we employ the maximal disk thinning (MDT) 
to compute the initial skeletons. As the concepts of the object $O$, the
distance transform $D$, and homotopic thinning have already been introduced
in Section~\ref{sec:skeletonisation}, we only summarise the algorithmic procedure
here. A detailed description and evaluation can be found in
\cite{PB12}.

The method extends upon the original approach by \citet{remy2005} which proposes to consider the centres of inscribed disks with maximal radius as candidates for the skeleton. However, MDT also incorporates the original grass-fire idea of \citet{Bl67} in a level set paradigm: From the boundary inwards, it considers sets of points that have equal values in the distance transform $D$ (i.e. $D$ is increasing with each iteration).

All object points that are not skeleton candidates are removed in each iteration, unless their removal destroys the homotopy of the remaining shape. To avoid spurious branches, isolated skeleton candidates are pruned if they have no neighbouring candidates in a $5 \times 5$ window.

Algorithm~\ref{alg:mdt} contains the pseudocode of the MDT algorithm adapted from \cite{PB12}. For the definition of simple and end points, please refer to Definition~\ref{def:points} the helper function \emph{isSimple} and \emph{isEndpoint} check the criteria of this definition. Fig.~\ref{fig:thin} illustrates MDT thinning on a toy example.

\begin{algorithm2e}[t]
	\caption{Maximal Disk Thinning\label{alg:mdt}}
	\tcc{Inputs (precomputed): $order$ (cached ordering values according to $D$), $ep$ (endpoints identified with the RT algorithm)}
	\tcc{$hp$: min-heap ordered by $order[\cdot]$, $N_8,N_{24}$: 8- and $5\times5$-neighbourhoods}
	
	\tcc{Prune spurious endpoint candidates.}
	\ForEach{$p \in O$}{
		\If{$\bigl|\{x \in N_{24}(p)\, \vertL \, ep[x]=\textnormal{TRUE}\}\bigr| < 2$}{
			$ep[p] \gets \textnormal{FALSE}$\;
		}
	}
	
	\tcc{Initialise heap with boundary points marked for deletion.}
	\ForEach{$p \in \partial O$}{
		\If{$isSimple(p)$}{
			insert $p$ into $hp$ with key $order[p]$\;
		}
	}
	
	\tcc{Homotopy preserving thinning with maximal-disk endpoint protection}
	\While{$|hp|>0$}{
		$p \gets popmin(hp)$\;
		\If{$isSimple(p)$ \textbf{and} $\bigl(\neg isEndpoint(p)\ \textbf{or}\ ep[p]=\textnormal{FALSE}\bigr)$}{
			remove $p$ from $O$\;
			\ForEach{$q \in N_8(p)$}{
				\If{$isSimple(q)$}{
					insert $q$ into $hp$ with key $order[q]$\;
				}
			}
		}
	}
\end{algorithm2e}

\begin{figure}[t]
	\centering
	\begin{tikzpicture}[
		x=0.28cm, y=0.28cm,
		cell/.style={draw=black!30, line width=0.2pt},
		pix/.style={fill=black, draw=black!30, line width=0.2pt},
		arr/.style={->, line width=0.6pt, draw=black!55}
		]
		
		\def\W{9}
		\def\H{5} 
		
		\newcommand{\StageList}[2]{%
			\begin{scope}[shift={(#1,0)}]
				\foreach \i in {0,...,\W}{
					\draw[cell] (\i,0) -- (\i,\H);
				}
				\foreach \j in {0,...,\H}{
					\draw[cell] (0,\j) -- (\W,\j);
				}
				\foreach \x/\y in {#2}{
					\path[pix] (\x,\y) rectangle ++(1,1);
				}
			\end{scope}
		}
		
		\newcommand{\StageRect}[5]{%
			\begin{scope}[shift={(#1,0)}]
				\foreach \i in {0,...,\W}{
					\draw[cell] (\i,0) -- (\i,\H);
				}
				\foreach \j in {0,...,\H}{
					\draw[cell] (0,\j) -- (\W,\j);
				}
				\foreach \x in {#2,...,#3}{
					\foreach \y in {#4,...,#5}{
						\path[pix] (\x,\y) rectangle ++(1,1);
					}
				}
			\end{scope}
		}

		\StageRect{0}{0}{8}{0}{4}
		
		%
		\StageList{13}{
			0/0, 0/4, 8/0, 8/4,
			1/1, 2/1, 3/1, 4/1, 5/1, 6/1,
			1/2, 2/2, 7/2, 
			1/3, 2/3, 3/3, 4/3, 5/3, 6/3,
			7/1, 6/2, 7/3,
			3/2,4/2,5/2
		}
		
		\StageList{26}{
			0/0, 0/4, 8/0, 8/4,
			1/1, 2/2, 1/3,
			7/1, 6/2, 7/3,
			3/2,4/2,5/2
		}
		
		\draw[arr] (10.2,2.5) -- (12.2,2.5);
		\draw[arr] (23.2,2.5) -- (25.2,2.5);

	\end{tikzpicture}
	
	\medskip
	
	\caption{\textbf{Toy illustration of thinning on a rectangle.}
		The full rectangle of step 1 is reduced to its thin discrete skeleton in two steps. In each step, we remove one level set of non-skeleton points as in the MDT algorithm.}
	\label{fig:thin}
\end{figure}

%
%




\end{appendices}


\bibliography{peter_refs.bib}


\begin{thebibliography}{74}
\ifx \bisbn   \undefined \def \bisbn  #1{ISBN #1}\fi
\ifx \binits  \undefined \def \binits#1{#1}\fi
\ifx \bauthor  \undefined \def \bauthor#1{#1}\fi
\ifx \batitle  \undefined \def \batitle#1{#1}\fi
\ifx \bjtitle  \undefined \def \bjtitle#1{#1}\fi
\ifx \bvolume  \undefined \def \bvolume#1{\textbf{#1}}\fi
\ifx \byear  \undefined \def \byear#1{#1}\fi
\ifx \bissue  \undefined \def \bissue#1{#1}\fi
\ifx \bfpage  \undefined \def \bfpage#1{#1}\fi
\ifx \blpage  \undefined \def \blpage #1{#1}\fi
\ifx \burl  \undefined \def \burl#1{\textsf{#1}}\fi
\ifx \doiurl  \undefined \def \doiurl#1{\url{https://doi.org/#1}}\fi
\ifx \betal  \undefined \def \betal{\textit{et al.}}\fi
\ifx \binstitute  \undefined \def \binstitute#1{#1}\fi
\ifx \binstitutionaled  \undefined \def \binstitutionaled#1{#1}\fi
\ifx \bctitle  \undefined \def \bctitle#1{#1}\fi
\ifx \beditor  \undefined \def \beditor#1{#1}\fi
\ifx \bpublisher  \undefined \def \bpublisher#1{#1}\fi
\ifx \bbtitle  \undefined \def \bbtitle#1{#1}\fi
\ifx \bedition  \undefined \def \bedition#1{#1}\fi
\ifx \bseriesno  \undefined \def \bseriesno#1{#1}\fi
\ifx \blocation  \undefined \def \blocation#1{#1}\fi
\ifx \bsertitle  \undefined \def \bsertitle#1{#1}\fi
\ifx \bsnm \undefined \def \bsnm#1{#1}\fi
\ifx \bsuffix \undefined \def \bsuffix#1{#1}\fi
\ifx \bparticle \undefined \def \bparticle#1{#1}\fi
\ifx \barticle \undefined \def \barticle#1{#1}\fi
\bibcommenthead
\ifx \bconfdate \undefined \def \bconfdate #1{#1}\fi
\ifx \botherref \undefined \def \botherref #1{#1}\fi
\ifx \url \undefined \def \url#1{\textsf{#1}}\fi
\ifx \bchapter \undefined \def \bchapter#1{#1}\fi
\ifx \bbook \undefined \def \bbook#1{#1}\fi
\ifx \bcomment \undefined \def \bcomment#1{#1}\fi
\ifx \oauthor \undefined \def \oauthor#1{#1}\fi
\ifx \citeauthoryear \undefined \def \citeauthoryear#1{#1}\fi
\ifx \endbibitem  \undefined \def \endbibitem {}\fi
\ifx \bconflocation  \undefined \def \bconflocation#1{#1}\fi
\ifx \arxivurl  \undefined \def \arxivurl#1{\textsf{#1}}\fi
\csname PreBibitemsHook\endcsname

\bibitem[\protect\citeauthoryear{Alvarez et~al.}{1993}]{AGLM93}
\begin{barticle}
\bauthor{\bsnm{Alvarez}, \binits{L.}},
\bauthor{\bsnm{Guichard}, \binits{F.}},
\bauthor{\bsnm{Lions}, \binits{P.-L.}},
\bauthor{\bsnm{Morel}, \binits{J.-M.}}:
\batitle{Axioms and fundamental equations in image processing}.
\bjtitle{Archive for Rational Mechanics and Analysis}
\bvolume{123},
\bfpage{199}--\blpage{257}
(\byear{1993})
\end{barticle}
\endbibitem

\bibitem[\protect\citeauthoryear{Iijima}{1962}]{Ii62}
\begin{barticle}
\bauthor{\bsnm{Iijima}, \binits{T.}}:
\batitle{Basic theory on normalization of pattern (in case of typical
  one-dimensional pattern)}.
\bjtitle{Bulletin of the Electrotechnical Laboratory}
\bvolume{26},
\bfpage{368}--\blpage{388}
(\byear{1962}).
\bcomment{In Japanese}
\end{barticle}
\endbibitem

\bibitem[\protect\citeauthoryear{Lindeberg}{2011}]{Li11}
\begin{barticle}
\bauthor{\bsnm{Lindeberg}, \binits{T.}}:
\batitle{Generalized {G}aussian scale-space axiomatics comprising linear
  scale-space, affine scale-space and spatio-temporal scale-space}.
\bjtitle{Journal of Mathematical Imaging and Vision}
\bvolume{40},
\bfpage{36}--\blpage{81}
(\byear{2011})
\end{barticle}
\endbibitem

\bibitem[\protect\citeauthoryear{Scherzer and Weickert}{2000}]{SchW98}
\begin{barticle}
\bauthor{\bsnm{Scherzer}, \binits{O.}},
\bauthor{\bsnm{Weickert}, \binits{J.}}:
\batitle{Relations between regularization and diffusion filtering}.
\bjtitle{Journal of Mathematical Imaging and Vision}
\bvolume{12}(\bissue{1}),
\bfpage{43}--\blpage{63}
(\byear{2000})
\end{barticle}
\endbibitem

\bibitem[\protect\citeauthoryear{Weickert}{1998}]{We97}
\begin{bbook}
\bauthor{\bsnm{Weickert}, \binits{J.}}:
\bbtitle{Anisotropic Diffusion in Image Processing}.
\bpublisher{Teubner},
\blocation{Stuttgart}
(\byear{1998})
\end{bbook}
\endbibitem

\bibitem[\protect\citeauthoryear{C\'ardenas et~al.}{2019}]{CPW19}
\begin{bchapter}
\bauthor{\bsnm{C\'ardenas}, \binits{M.}},
\bauthor{\bsnm{Peter}, \binits{P.}},
\bauthor{\bsnm{Weickert}, \binits{J.}}:
\bctitle{Sparsification scale-spaces}.
In: \beditor{\bsnm{Lellmann}, \binits{J.}},
\beditor{\bsnm{Burger}, \binits{M.}},
\beditor{\bsnm{Modersitzki}, \binits{J.}} (eds.)
\bbtitle{Scale Space and Variational Methods in Computer Vision}.
\bsertitle{Lecture Notes in Computer Science},
vol. \bseriesno{11603},
pp. \bfpage{303}--\blpage{314}.
\bpublisher{Springer},
\blocation{Cham}
(\byear{2019})
\end{bchapter}
\endbibitem

\bibitem[\protect\citeauthoryear{Siddiqi and Pizer}{2008}]{SP08}
\begin{bbook}
\bauthor{\bsnm{Siddiqi}, \binits{K.}},
\bauthor{\bsnm{Pizer}, \binits{S.}}:
\bbtitle{Medial Representations: Mathematics, Algorithms and Applications}.
\bsertitle{Computational Imaging and Vision},
vol. \bseriesno{37}.
\bpublisher{Springer},
\blocation{Dordrecht}
(\byear{2008})
\end{bbook}
\endbibitem

\bibitem[\protect\citeauthoryear{Blum}{1967}]{Bl67}
\begin{bchapter}
\bauthor{\bsnm{Blum}, \binits{H.}}:
\bctitle{A transformation for extracting new descriptors of shape}.
In: \beditor{\bsnm{Wathen-Dunn}, \binits{W.}} (ed.)
\bbtitle{Models for the Perception of Speech and Visual Form},
pp. \bfpage{362}--\blpage{380}.
\bpublisher{MIT press},
\blocation{Cambridge, MA}
(\byear{1967})
\end{bchapter}
\endbibitem

\bibitem[\protect\citeauthoryear{Bai et~al.}{2007}]{BLL07}
\begin{barticle}
\bauthor{\bsnm{Bai}, \binits{X.}},
\bauthor{\bsnm{Latecki}, \binits{L.J.}},
\bauthor{\bsnm{Liu}, \binits{W.-Y.}}:
\batitle{Skeleton pruning by contour partitioning with discrete curve
  evolution}.
\bjtitle{IEEE Transactions on Pattern Analysis and Machine Intelligence}
\bvolume{29}(\bissue{3}),
\bfpage{449}--\blpage{462}
(\byear{2007})
\end{barticle}
\endbibitem

\bibitem[\protect\citeauthoryear{Montero and Lang}{2012}]{ML12}
\begin{barticle}
\bauthor{\bsnm{Montero}, \binits{A.S.}},
\bauthor{\bsnm{Lang}, \binits{J.}}:
\batitle{Skeleton pruning by contour approximation and the integer medial axis
  transform}.
\bjtitle{Computers \& Graphics}
\bvolume{36}(\bissue{5}),
\bfpage{477}--\blpage{487}
(\byear{2012})
\end{barticle}
\endbibitem

\bibitem[\protect\citeauthoryear{Ogniewicz}{1994}]{Og94}
\begin{bchapter}
\bauthor{\bsnm{Ogniewicz}, \binits{R.L.}}:
\bctitle{Automatic medial axis pruning based on characteristics of the
  skeleton-space}.
In: \bbtitle{Shape, Structure and Pattern Recognition -- Proc.~1994 IAPR
  International Workshop},
\bconflocation{Nahariya, Israel},
pp. \bfpage{23}--\blpage{32}
(\byear{1994})
\end{bchapter}
\endbibitem

\bibitem[\protect\citeauthoryear{Shaked and Bruckstein}{1998}]{SB98}
\begin{barticle}
\bauthor{\bsnm{Shaked}, \binits{D.}},
\bauthor{\bsnm{Bruckstein}, \binits{A.}}:
\batitle{Pruning medial axes}.
\bjtitle{Computer Vision and Image Understanding}
\bvolume{69}(\bissue{2}),
\bfpage{156}--\blpage{169}
(\byear{1998})
\end{barticle}
\endbibitem

\bibitem[\protect\citeauthoryear{Serino and {Sanniti di Baja}}{2016}]{SS16}
\begin{barticle}
\bauthor{\bsnm{Serino}, \binits{L.}},
\bauthor{\bsnm{{Sanniti di Baja}}, \binits{G.}}:
\batitle{A new strategy for skeleton pruning}.
\bjtitle{Pattern Recognition Letters}
\bvolume{76},
\bfpage{41}--\blpage{48}
(\byear{2016})
\end{barticle}
\endbibitem

\bibitem[\protect\citeauthoryear{Tam and Heidrich}{2002}]{TH02}
\begin{bchapter}
\bauthor{\bsnm{Tam}, \binits{R.}},
\bauthor{\bsnm{Heidrich}, \binits{W.}}:
\bctitle{Feature-preserving medial axis noise removal}.
In: \bbtitle{Proc.~7th European Conference on Computer Vision (ECCV), Part II},
\bconflocation{Copenhagen, Denmark},
pp. \bfpage{672}--\blpage{686}
(\byear{2002})
\end{bchapter}
\endbibitem

\bibitem[\protect\citeauthoryear{Brockett and Maragos}{1992}]{BM92}
\begin{bchapter}
\bauthor{\bsnm{Brockett}, \binits{R.W.}},
\bauthor{\bsnm{Maragos}, \binits{P.}}:
\bctitle{Evolution equations for continuous-scale morphology}.
In: \bbtitle{Proc.~1992 IEEE International Conference on Acoustics, Speech and
  Signal Processing},
vol. \bseriesno{3}.
\bconflocation{San Francisco, {CA}},
pp. \bfpage{125}--\blpage{128}
(\byear{1992})
\end{bchapter}
\endbibitem

\bibitem[\protect\citeauthoryear{Caselles and Sbert}{1996}]{CS96}
\begin{barticle}
\bauthor{\bsnm{Caselles}, \binits{V.}},
\bauthor{\bsnm{Sbert}, \binits{C.}}:
\batitle{What is the best causal scale space for three-dimensional images?}
\bjtitle{SIAM Journal on Applied Mathematics}
\bvolume{56}(\bissue{4}),
\bfpage{1199}--\blpage{1246}
(\byear{1996})
\end{barticle}
\endbibitem

\bibitem[\protect\citeauthoryear{Kimia and Siddiqi}{1996}]{KS96}
\begin{barticle}
\bauthor{\bsnm{Kimia}, \binits{B.B.}},
\bauthor{\bsnm{Siddiqi}, \binits{K.}}:
\batitle{Geometric heat equation and non-linear diffusion of shapes and
  images}.
\bjtitle{Computer Vision and Image Understanding}
\bvolume{64},
\bfpage{305}--\blpage{322}
(\byear{1996})
\end{barticle}
\endbibitem

\bibitem[\protect\citeauthoryear{Sapiro and Tannenbaum}{1993}]{ST93}
\begin{barticle}
\bauthor{\bsnm{Sapiro}, \binits{G.}},
\bauthor{\bsnm{Tannenbaum}, \binits{A.}}:
\batitle{Affine invariant scale-space}.
\bjtitle{International Journal of Computer Vision}
\bvolume{11},
\bfpage{25}--\blpage{44}
(\byear{1993})
\end{barticle}
\endbibitem

\bibitem[\protect\citeauthoryear{{van den Boomgaard} and
  Smeulders}{1994}]{BS94}
\begin{barticle}
\bauthor{\bsnm{{van den Boomgaard}}, \binits{R.}},
\bauthor{\bsnm{Smeulders}, \binits{A.}}:
\batitle{The morphological structure of images: The differential equations of
  morphological scale-space}.
\bjtitle{IEEE Transactions on Pattern Analysis and Machine Intelligence}
\bvolume{16},
\bfpage{1101}--\blpage{1113}
(\byear{1994})
\end{barticle}
\endbibitem

\bibitem[\protect\citeauthoryear{Soille}{2003}]{So99a}
\begin{bbook}
\bauthor{\bsnm{Soille}, \binits{P.}}:
\bbtitle{Morphological Image Analysis},
\bedition{2}nd edn.
\bpublisher{Springer},
\blocation{Berlin}
(\byear{2003})
\end{bbook}
\endbibitem

\bibitem[\protect\citeauthoryear{Pizer et~al.}{2003}]{PSSDZ03}
\begin{barticle}
\bauthor{\bsnm{Pizer}, \binits{S.M.}},
\bauthor{\bsnm{Siddiqi}, \binits{K.}},
\bauthor{\bsnm{Sz{\'e}kely}, \binits{G.}},
\bauthor{\bsnm{Damon}, \binits{J.N.}},
\bauthor{\bsnm{Zucker}, \binits{S.W.}}:
\batitle{Multiscale medial loci and their properties}.
\bjtitle{International Journal of Computer Vision}
\bvolume{55},
\bfpage{155}--\blpage{179}
(\byear{2003})
\end{barticle}
\endbibitem

\bibitem[\protect\citeauthoryear{Saha et~al.}{2016}]{SBS16}
\begin{barticle}
\bauthor{\bsnm{Saha}, \binits{P.K.}},
\bauthor{\bsnm{Borgefors}, \binits{G.}},
\bauthor{\bsnm{{Sanniti di Baja}}, \binits{G.}}:
\batitle{A survey on skeletonization algorithms and their applications}.
\bjtitle{Pattern Recognition Letters}
\bvolume{76}(\bissue{C}),
\bfpage{3}--\blpage{12}
(\byear{2016})
\end{barticle}
\endbibitem

\bibitem[\protect\citeauthoryear{Saha et~al.}{2017}]{SBS17}
\begin{bchapter}
\bauthor{\bsnm{Saha}, \binits{P.K.}},
\bauthor{\bsnm{Borgefors}, \binits{G.}},
\bauthor{\bsnm{{Sanniti di Baja}}, \binits{G.}}:
\bctitle{Skeletonization and its applications –- a review}.
In: \beditor{\bsnm{Saha}, \binits{P.K.}},
\beditor{\bsnm{Borgefors}, \binits{G.}},
\beditor{\bsnm{{Sanniti di Baja}}, \binits{G.}} (eds.)
\bbtitle{Skeletonization},
pp. \bfpage{3}--\blpage{42}.
\bpublisher{Academic Press},
\blocation{London, UK}
(\byear{2017})
\end{bchapter}
\endbibitem

\bibitem[\protect\citeauthoryear{Tsogkas and Kokkinos}{2012}]{TK12}
\begin{bchapter}
\bauthor{\bsnm{Tsogkas}, \binits{S.}},
\bauthor{\bsnm{Kokkinos}, \binits{I.}}:
\bctitle{Learning-based symmetry detection in natural images}.
In: \beditor{\bsnm{Fitzgibbon}, \binits{A.}},
\beditor{\bsnm{Lazebnik}, \binits{S.}},
\beditor{\bsnm{Perona}, \binits{P.}},
\beditor{\bsnm{Saito}, \binits{Y.}},
\beditor{\bsnm{Schmid}, \binits{C.}} (eds.)
\bbtitle{Computer Vision -- ECCV 2012}.
\bsertitle{Lecture Notes in Computer Science},
vol. \bseriesno{7574},
pp. \bfpage{41}--\blpage{54}.
\bpublisher{Spinger},
\blocation{Berlin}
(\byear{2012})
\end{bchapter}
\endbibitem

\bibitem[\protect\citeauthoryear{Pizer et~al.}{1994}]{PBCFM94}
\begin{barticle}
\bauthor{\bsnm{Pizer}, \binits{S.M.}},
\bauthor{\bsnm{Burbeck}, \binits{C.A.}},
\bauthor{\bsnm{Coggins}, \binits{J.M.}},
\bauthor{\bsnm{Fritsch}, \binits{D.S.}},
\bauthor{\bsnm{Morse}, \binits{B.S.}}:
\batitle{Object shape before boundary shape: Scale-space medial axes}.
\bjtitle{Journal of Mathematical Imaging and Vision}
\bvolume{4}(\bissue{3}),
\bfpage{303}--\blpage{313}
(\byear{1994})
\end{barticle}
\endbibitem

\bibitem[\protect\citeauthoryear{Pizer et~al.}{1998}]{PEFM98}
\begin{barticle}
\bauthor{\bsnm{Pizer}, \binits{S.M.}},
\bauthor{\bsnm{Eberly}, \binits{D.}},
\bauthor{\bsnm{Fritsch}, \binits{D.S.}},
\bauthor{\bsnm{Morse}, \binits{B.S.}}:
\batitle{Zoom-invariant vision of figural shape: The mathematics of cores}.
\bjtitle{Computer Vision and Image Understanding}
\bvolume{69}(\bissue{1}),
\bfpage{55}--\blpage{71}
(\byear{1998})
\end{barticle}
\endbibitem

\bibitem[\protect\citeauthoryear{Gierke and Peter}{2025}]{GP25}
\begin{bchapter}
\bauthor{\bsnm{Gierke}, \binits{J.}},
\bauthor{\bsnm{Peter}, \binits{P.}}:
\bctitle{Skeletonisation scale-spaces}.
In: \beditor{\bsnm{Bubba}, \binits{T.}},
\beditor{\bsnm{Gaburro}, \binits{R.}},
\beditor{\bsnm{Gazzola}, \binits{S.}},
\beditor{\bsnm{Papafitsoros}, \binits{K.}},
\beditor{\bsnm{Pereyra}, \binits{M.}},
\beditor{\bsnm{Schönlieb}, \binits{C.}} (eds.)
\bbtitle{Scale Space and Variational Methods in Computer Vision}.
\bsertitle{Lecture Notes in Computer Science},
vol. \bseriesno{15668},
pp. \bfpage{166}--\blpage{178}.
\bpublisher{Springer},
\blocation{Cham}
(\byear{2025})
\end{bchapter}
\endbibitem

\bibitem[\protect\citeauthoryear{Breu{\ss} et~al.}{2020}]{BBYBP20}
\begin{barticle}
\bauthor{\bsnm{Breu{\ss}}, \binits{M.}},
\bauthor{\bsnm{Buhl}, \binits{J.}},
\bauthor{\bsnm{Yarahmadi}, \binits{A.M.}},
\bauthor{\bsnm{Bambach}, \binits{M.}},
\bauthor{\bsnm{Peter}, \binits{P.}}:
\batitle{A simple approach to stiffness enhancement of a printable shape by
  {H}amilton-{J}acobi skeletonization}.
\bjtitle{Procedia Manufacturing}
\bvolume{47},
\bfpage{1190}--\blpage{1196}
(\byear{2020})
\end{barticle}
\endbibitem

\bibitem[\protect\citeauthoryear{Haj{\l}asz}{2022}]{hajlasz2020old}
\begin{botherref}
\oauthor{\bsnm{Haj{\l}asz}, \binits{P.}}:
On an old theorem of erd\"os about ambiguous locus.
Colloquium Mathematicum
\textbf{168}(1)
(2022)
\end{botherref}
\endbibitem

\bibitem[\protect\citeauthoryear{Erd{\"o}s}{1946}]{erdos1946hausdorff}
\begin{barticle}
\bauthor{\bsnm{Erd{\"o}s}, \binits{P.}}:
\batitle{On the hausdorff dimension of some sets in euclidean space}.
\bjtitle{Bulletin of the American Mathematical Society}
\bvolume{52}(\bissue{2}),
\bfpage{107}--\blpage{109}
(\byear{1946})
\end{barticle}
\endbibitem

\bibitem[\protect\citeauthoryear{Giblin and Kimia}{2003}]{GK03b}
\begin{barticle}
\bauthor{\bsnm{Giblin}, \binits{P.}},
\bauthor{\bsnm{Kimia}, \binits{B.}}:
\batitle{On the intrinsic reconstruction of shape from its symmetries}.
\bjtitle{IEEE Transactions on Pattern Analysis and Machine Intelligence}
\bvolume{25}(\bissue{7}),
\bfpage{895}--\blpage{911}
(\byear{2003})
\end{barticle}
\endbibitem

\bibitem[\protect\citeauthoryear{Sethian}{1996}]{Se96}
\begin{bbook}
\bauthor{\bsnm{Sethian}, \binits{J.A.}}:
\bbtitle{Level Set Methods}.
\bpublisher{Cambridge University Press},
\blocation{Cambridge, UK}
(\byear{1996})
\end{bbook}
\endbibitem

\bibitem[\protect\citeauthoryear{Rouy and Tourin}{1992}]{RT92}
\begin{barticle}
\bauthor{\bsnm{Rouy}, \binits{E.}},
\bauthor{\bsnm{Tourin}, \binits{A.}}:
\batitle{A viscosity solutions approach to shape-from-shading}.
\bjtitle{SIAM Journal on Numerical Analysis}
\bvolume{29},
\bfpage{867}--\blpage{884}
(\byear{1992})
\end{barticle}
\endbibitem

\bibitem[\protect\citeauthoryear{Siddiqi et~al.}{2002}]{siddiqi:2002}
\begin{barticle}
\bauthor{\bsnm{Siddiqi}, \binits{K.}},
\bauthor{\bsnm{Bouix}, \binits{S.}},
\bauthor{\bsnm{Tannenbaum}, \binits{A.}},
\bauthor{\bsnm{Zucker}, \binits{S.W.}}:
\batitle{{H}amilton-{J}acobi {S}keletons}.
\bjtitle{Int. J. Comput. Vision}
\bvolume{48}(\bissue{3}),
\bfpage{215}--\blpage{231}
(\byear{2002})
\end{barticle}
\endbibitem

\bibitem[\protect\citeauthoryear{Kuijper and Olsen}{2005}]{KO05}
\begin{bchapter}
\bauthor{\bsnm{Kuijper}, \binits{A.}},
\bauthor{\bsnm{Olsen}, \binits{O.F.}}:
\bctitle{Essential loops and their relevance for skeletons and symmetry sets}.
In: \beditor{\bsnm{O.~F.~Olsen}, \binits{A.K.} \bsuffix{L.~Florack}} (ed.)
\bbtitle{Deep Structure, Singularities, and Computer Vision}.
\bsertitle{Lecture Notes in Computer Science},
vol. \bseriesno{3753},
pp. \bfpage{24}--\blpage{35}.
\bpublisher{Spinger},
\blocation{Berlin}
(\byear{2005})
\end{bchapter}
\endbibitem

\bibitem[\protect\citeauthoryear{Damon}{2005}]{Da05}
\begin{barticle}
\bauthor{\bsnm{Damon}, \binits{J.}}:
\batitle{Determining the geometry of boundaries of objects from medial data}.
\bjtitle{International Journal of Computer Vision}
\bvolume{63}(\bissue{1}),
\bfpage{45}--\blpage{64}
(\byear{2005})
\end{barticle}
\endbibitem

\bibitem[\protect\citeauthoryear{Bruce et~al.}{1985}]{BGG85}
\begin{barticle}
\bauthor{\bsnm{Bruce}, \binits{J.}},
\bauthor{\bsnm{J.}, \binits{P.}},
\bauthor{\bsnm{Giblin}},
\bauthor{\bsnm{Gibson}, \binits{C.G.}}:
\batitle{Symmetry sets}.
\bjtitle{Proceedings of the Royal Society of London, Series A}
\bvolume{101}(\bissue{1-2}),
\bfpage{163}--\blpage{186}
(\byear{1985})
\end{barticle}
\endbibitem

\bibitem[\protect\citeauthoryear{Tagliasacchi et~al.}{2016}]{TDSAT16}
\begin{barticle}
\bauthor{\bsnm{Tagliasacchi}, \binits{A.}},
\bauthor{\bsnm{Delame}, \binits{T.}},
\bauthor{\bsnm{Spagnuolo}, \binits{M.}},
\bauthor{\bsnm{Amenta}, \binits{N.}},
\bauthor{\bsnm{Telea}, \binits{A.}}:
\batitle{3{D} skeletons: A state-of-the-art report}.
\bjtitle{Computer Graphics Forum}
\bvolume{35}(\bissue{2}),
\bfpage{573}--\blpage{597}
(\byear{2016})
\end{barticle}
\endbibitem

\bibitem[\protect\citeauthoryear{Giblin and Kimia}{2003}]{GK03}
\begin{barticle}
\bauthor{\bsnm{Giblin}, \binits{P.J.}},
\bauthor{\bsnm{Kimia}, \binits{B.B.}}:
\batitle{On the local form and transitions of symmetry sets, medial axes, and
  shocks}.
\bjtitle{International Journal of Computer Vision}
\bvolume{54},
\bfpage{143}--\blpage{157}
(\byear{2003})
\end{barticle}
\endbibitem

\bibitem[\protect\citeauthoryear{Kimia et~al.}{1995}]{kimia1995}
\begin{barticle}
\bauthor{\bsnm{Kimia}, \binits{B.}},
\bauthor{\bsnm{Tannenbaum}, \binits{A.}},
\bauthor{\bsnm{Zucker}, \binits{S.}}:
\batitle{Shapes, shocks, and deformations {I}: the components of
  two-dimensional shape and the reaction-diffusion space}.
\bjtitle{International Journal of Computer Vision}
\bvolume{15}(\bissue{3}),
\bfpage{189}--\blpage{224}
(\byear{1995})
\end{barticle}
\endbibitem

\bibitem[\protect\citeauthoryear{Opiyo et~al.}{2021}]{opiyo2021}
\begin{barticle}
\bauthor{\bsnm{Opiyo}, \binits{S.}},
\bauthor{\bsnm{Okinda}, \binits{C.}},
\bauthor{\bsnm{Zhou}, \binits{J.}},
\bauthor{\bsnm{Mwangi}, \binits{E.}},
\bauthor{\bsnm{Makange}, \binits{N.}}:
\batitle{Medial axis-based machine-vision system for orchard robot navigation}.
\bjtitle{Comput. Electron. Agric.}
\bvolume{185},
\bfpage{106153}
(\byear{2021})
\end{barticle}
\endbibitem

\bibitem[\protect\citeauthoryear{Li et~al.}{2023}]{li2023}
\begin{barticle}
\bauthor{\bsnm{Li}, \binits{X.}},
\bauthor{\bsnm{Song}, \binits{S.}},
\bauthor{\bsnm{Yao}, \binits{J.}},
\bauthor{\bsnm{Zhang}, \binits{H.}},
\bauthor{\bsnm{Zhou}, \binits{R.}},
\bauthor{\bsnm{Hong}, \binits{Q.}}:
\batitle{Efficient collision detection using hybrid medial axis transform and
  {BVH} for rigid body simulation}.
\bjtitle{Graphical Models}
\bvolume{128},
\bfpage{101180}
(\byear{2023})
\end{barticle}
\endbibitem

\bibitem[\protect\citeauthoryear{Mühlhaus}{2020}]{Mu20}
\begin{botherref}
\oauthor{\bsnm{Mühlhaus}, \binits{M.}}:
Compressing binary images with the medial axis transform.
Bachelor's thesis,
Faculty of Mathematics and Computer Science, Saarland University
(2020)
\end{botherref}
\endbibitem

\bibitem[\protect\citeauthoryear{Mayer and
  Wartzack}{2023}]{mayer2023computational}
\begin{barticle}
\bauthor{\bsnm{Mayer}, \binits{J.}},
\bauthor{\bsnm{Wartzack}, \binits{S.}}:
\batitle{Computational geometry reconstruction from 3d topology optimization
  results: a new parametric approach by the medial axis}.
\bjtitle{Computer-Aided Design and Applications}
\bvolume{20},
\bfpage{960}--\blpage{975}
(\byear{2023})
\end{barticle}
\endbibitem

\bibitem[\protect\citeauthoryear{Witkin}{1983}]{Wi83}
\begin{bchapter}
\bauthor{\bsnm{Witkin}, \binits{A.P.}}:
\bctitle{Scale-space filtering}.
In: \bbtitle{Proc.~Eighth International Joint Conference on Artificial
  Intelligence},
vol. \bseriesno{2}.
\bconflocation{Karlsruhe, Germany},
pp. \bfpage{945}--\blpage{951}
(\byear{1983})
\end{bchapter}
\endbibitem

\bibitem[\protect\citeauthoryear{Lindeberg}{1994}]{Li94}
\begin{bbook}
\bauthor{\bsnm{Lindeberg}, \binits{T.}}:
\bbtitle{Scale-Space Theory in Computer Vision}.
\bpublisher{Kluwer},
\blocation{Boston}
(\byear{1994})
\end{bbook}
\endbibitem

\bibitem[\protect\citeauthoryear{Sporring et~al.}{1997}]{SNFJ96}
\begin{bbook}
\beditor{\bsnm{Sporring}, \binits{J.}},
\beditor{\bsnm{Nielsen}, \binits{M.}},
\beditor{\bsnm{Florack}, \binits{L.}},
\beditor{\bsnm{Johansen}, \binits{P.}} (eds.):
\bbtitle{Gaussian Scale-Space Theory}.
\bsertitle{Computational Imaging and Vision},
vol. \bseriesno{8}.
\bpublisher{Kluwer},
\blocation{Dordrecht}
(\byear{1997})
\end{bbook}
\endbibitem

\bibitem[\protect\citeauthoryear{Florack}{2013}]{Fl13}
\begin{bbook}
\bauthor{\bsnm{Florack}, \binits{L.}}:
\bbtitle{Image Structure}.
\bsertitle{Computational Imaging and Vision},
vol. \bseriesno{10}.
\bpublisher{Springer},
\blocation{Dordrecht}
(\byear{2013})
\end{bbook}
\endbibitem

\bibitem[\protect\citeauthoryear{Duits et~al.}{2004}]{DFGH04}
\begin{barticle}
\bauthor{\bsnm{Duits}, \binits{R.}},
\bauthor{\bsnm{Florack}, \binits{L.}},
\bauthor{\bsnm{{de Graaf}}, \binits{J.}},
\bauthor{\bsnm{{ter Haar Romeny}}, \binits{B.}}:
\batitle{On the axioms of scale space theory}.
\bjtitle{Journal of Mathematical Imaging and Vision}
\bvolume{20},
\bfpage{267}--\blpage{298}
(\byear{2004})
\end{barticle}
\endbibitem

\bibitem[\protect\citeauthoryear{Schmidt and Weickert}{2016}]{SW16}
\begin{barticle}
\bauthor{\bsnm{Schmidt}, \binits{M.}},
\bauthor{\bsnm{Weickert}, \binits{J.}}:
\batitle{Morphological counterparts of linear shift-invariant scale-spaces}.
\bjtitle{Journal of Mathematical Imaging and Vision}
\bvolume{56}(\bissue{2}),
\bfpage{352}--\blpage{366}
(\byear{2016})
\end{barticle}
\endbibitem

\bibitem[\protect\citeauthoryear{Felsberg and Sommer}{2001}]{FS01}
\begin{bchapter}
\bauthor{\bsnm{Felsberg}, \binits{M.}},
\bauthor{\bsnm{Sommer}, \binits{G.}}:
\bctitle{Scale-adaptive filtering derived from the {Laplace} equation}.
In: \beditor{\bsnm{Radig}, \binits{B.}},
\beditor{\bsnm{Florczyk}, \binits{S.}} (eds.)
\bbtitle{Pattern Recognition}.
\bsertitle{Lecture Notes in Computer Science},
vol. \bseriesno{2032},
pp. \bfpage{95}--\blpage{106}.
\bpublisher{Springer},
\blocation{Berlin}
(\byear{2001})
\end{bchapter}
\endbibitem

\bibitem[\protect\citeauthoryear{Burgeth et~al.}{2005}]{BDW05}
\begin{bchapter}
\bauthor{\bsnm{Burgeth}, \binits{B.}},
\bauthor{\bsnm{Didas}, \binits{S.}},
\bauthor{\bsnm{Weickert}, \binits{J.}}:
\bctitle{Relativistic scale-spaces}.
In: \beditor{\bsnm{Kimmel}, \binits{R.}},
\beditor{\bsnm{Sochen}, \binits{N.}},
\beditor{\bsnm{Weickert}, \binits{J.}} (eds.)
\bbtitle{Scale Space and {PDE} Methods in Computer Vision}.
\bsertitle{Lecture Notes in Computer Science},
vol. \bseriesno{3459},
pp. \bfpage{1}--\blpage{12}.
\bpublisher{Springer},
\blocation{Berlin}
(\byear{2005})
\end{bchapter}
\endbibitem

\bibitem[\protect\citeauthoryear{Andreu et~al.}{2001}]{ABCM98}
\begin{barticle}
\bauthor{\bsnm{Andreu}, \binits{F.}},
\bauthor{\bsnm{Ballester}, \binits{C.}},
\bauthor{\bsnm{Caselles}, \binits{V.}},
\bauthor{\bsnm{Maz\'on}, \binits{J.M.}}:
\batitle{Minimizing total variation flow}.
\bjtitle{Differential and Integral Equations}
\bvolume{14}(\bissue{3}),
\bfpage{321}--\blpage{360}
(\byear{2001})
\end{barticle}
\endbibitem

\bibitem[\protect\citeauthoryear{Bellettini et~al.}{2002}]{BCN02}
\begin{barticle}
\bauthor{\bsnm{Bellettini}, \binits{G.}},
\bauthor{\bsnm{Caselles}, \binits{V.}},
\bauthor{\bsnm{Novaga}, \binits{M.}}:
\batitle{The total variation flow in {$R^N$}}.
\bjtitle{Journal of Differential Equations}
\bvolume{184}(\bissue{2}),
\bfpage{475}--\blpage{525}
(\byear{2002})
\end{barticle}
\endbibitem

\bibitem[\protect\citeauthoryear{Wirth}{2010}]{wirth2010variational}
\begin{botherref}
\oauthor{\bsnm{Wirth}, \binits{B.K.J.}}:
Variational methods in shape space.
PhD thesis,
Rheinische Friedrich-Wilhelms-Universit\"at Bonn
(2010)
\end{botherref}
\endbibitem

\bibitem[\protect\citeauthoryear{Zhu et~al.}{1994}]{zhu1994}
\begin{bchapter}
\bauthor{\bsnm{Zhu}, \binits{Y.}},
\bauthor{\bsnm{Seneviratne}, \binits{L.D.}},
\bauthor{\bsnm{Earles}, \binits{S.W.E.}}:
\bctitle{A fast boundary based thinning algorithm}.
In: \bbtitle{Proc.~{IAPR} Workshop on Machine Vision Applications},
\bconflocation{Kawasaki, Japan},
pp. \bfpage{548}--\blpage{551}
(\byear{1994})
\end{bchapter}
\endbibitem

\bibitem[\protect\citeauthoryear{Palagyi and Nemeth}{2009}]{palagyi2009}
\begin{bchapter}
\bauthor{\bsnm{Palagyi}, \binits{K.}},
\bauthor{\bsnm{Nemeth}, \binits{G.}}:
\bctitle{Fully parallel 3{D} thinning algorithms based on sufficient conditions
  for topology preservation}.
In: \bbtitle{Proc.~15th {IAPR} International Conference on Discrete Geometry
  for Computer Imagery},
\bconflocation{Montreal, Canada},
pp. \bfpage{481}--\blpage{492}
(\byear{2009})
\end{bchapter}
\endbibitem

\bibitem[\protect\citeauthoryear{Pudney}{1998}]{pudney1998}
\begin{barticle}
\bauthor{\bsnm{Pudney}, \binits{C.}}:
\batitle{Distance-ordered homotopic thinning: a skeletonization algorithm for
  3d digital images}.
\bjtitle{Computer Vision and Image Understanding}
\bvolume{72}(\bissue{3}),
\bfpage{404}--\blpage{413}
(\byear{1998})
\end{barticle}
\endbibitem

\bibitem[\protect\citeauthoryear{Peter and Breu{\ss}}{2013}]{PB12}
\begin{bchapter}
\bauthor{\bsnm{Peter}, \binits{P.}},
\bauthor{\bsnm{Breu{\ss}}, \binits{M.}}:
\bctitle{Refined homotopic thinning algorithms and quality measures for
  skeletonisation methods}.
In: \beditor{\bsnm{Breu{\ss}}, \binits{M.}},
\beditor{\bsnm{Bruckstein}, \binits{A.}},
\beditor{\bsnm{Maragos}, \binits{P.}} (eds.)
\bbtitle{Innovations for Shape Analysis: Models and Algorithms}.
\bsertitle{Mathematics and Visualization},
pp. \bfpage{77}--\blpage{92}.
\bpublisher{Springer},
\blocation{Berlin}
(\byear{2013})
\end{bchapter}
\endbibitem

\bibitem[\protect\citeauthoryear{Malandain and
  {Fernandez-Vidal}}{1998}]{malandain1998}
\begin{barticle}
\bauthor{\bsnm{Malandain}, \binits{G.}},
\bauthor{\bsnm{{Fernandez-Vidal}}, \binits{S.}}:
\batitle{Distance-ordered homotopic thinning: a skeletonization algorithm for
  {3D} digital images}.
\bjtitle{Image and Vision Computing}
\bvolume{16}(\bissue{5}),
\bfpage{317}--\blpage{327}
(\byear{1998})
\end{barticle}
\endbibitem

\bibitem[\protect\citeauthoryear{Remy and Thiel}{2005}]{remy2005}
\begin{barticle}
\bauthor{\bsnm{Remy}, \binits{E.}},
\bauthor{\bsnm{Thiel}, \binits{E.}}:
\batitle{{E}xact medial axis with euclidean distance}.
\bjtitle{Image and Vision Computing}
\bvolume{23}(\bissue{2}),
\bfpage{167}--\blpage{175}
(\byear{2005})
\end{barticle}
\endbibitem

\bibitem[\protect\citeauthoryear{Mainberger et~al.}{2012}]{MHWT12}
\begin{bchapter}
\bauthor{\bsnm{Mainberger}, \binits{M.}},
\bauthor{\bsnm{Hoffmann}, \binits{S.}},
\bauthor{\bsnm{Weickert}, \binits{J.}},
\bauthor{\bsnm{Tang}, \binits{C.H.}},
\bauthor{\bsnm{Johannsen}, \binits{D.}},
\bauthor{\bsnm{Neumann}, \binits{F.}},
\bauthor{\bsnm{Doerr}, \binits{B.}}:
\bctitle{Optimising spatial and tonal data for homogeneous diffusion
  inpainting}.
In: \beditor{\bsnm{Bruckstein}, \binits{A.M.}},
\beditor{\bsnm{Haar~Romeny}, \binits{B.}},
\beditor{\bsnm{Bronstein}, \binits{A.M.}},
\beditor{\bsnm{Bronstein}, \binits{M.M.}} (eds.)
\bbtitle{Scale Space and Variational Methods in Computer Vision}.
\bsertitle{Lecture Notes in Computer Science},
vol. \bseriesno{6667},
pp. \bfpage{26}--\blpage{37}.
\bpublisher{Springer},
\blocation{Berlin}
(\byear{2012})
\end{bchapter}
\endbibitem

\bibitem[\protect\citeauthoryear{Hoeltgen et~al.}{2017}]{HMHW17}
\begin{bchapter}
\bauthor{\bsnm{Hoeltgen}, \binits{L.}},
\bauthor{\bsnm{Mainberger}, \binits{M.}},
\bauthor{\bsnm{Hoffmann}, \binits{S.}},
\bauthor{\bsnm{Weickert}, \binits{J.}},
\bauthor{\bsnm{Tang}, \binits{C.H.}},
\bauthor{\bsnm{Setzer}, \binits{S.}},
\bauthor{\bsnm{Johannsen}, \binits{D.}},
\bauthor{\bsnm{Neumann}, \binits{F.}},
\bauthor{\bsnm{Doerr}, \binits{B.}}:
\bctitle{Optimising spatial and tonal data for {PDE}-based inpainting}.
In: \beditor{\bsnm{Bergounioux}, \binits{M.}},
\beditor{\bsnm{Peyr\'e}, \binits{G.}},
\beditor{\bsnm{Schn\"orr}, \binits{C.}},
\beditor{\bsnm{Caillau}, \binits{J.-P.}},
\beditor{\bsnm{Haberkorn}, \binits{T.}} (eds.)
\bbtitle{Variational Methods in Imaging and Geometric Control}.
\bsertitle{Radon Series on Computational and Applied Mathematics},
vol. \bseriesno{18},
pp. \bfpage{35}--\blpage{83}.
\bpublisher{De Gruyter},
\blocation{Berlin}
(\byear{2017})
\end{bchapter}
\endbibitem

\bibitem[\protect\citeauthoryear{Demaret et~al.}{2006}]{DDI06}
\begin{barticle}
\bauthor{\bsnm{Demaret}, \binits{L.}},
\bauthor{\bsnm{Dyn}, \binits{N.}},
\bauthor{\bsnm{Iske}, \binits{A.}}:
\batitle{Image compression by linear splines over adaptive triangulations}.
\bjtitle{Signal Processing}
\bvolume{86}(\bissue{7}),
\bfpage{1604}--\blpage{1616}
(\byear{2006})
\end{barticle}
\endbibitem

\bibitem[\protect\citeauthoryear{Peter}{2021}]{Pe21}
\begin{bchapter}
\bauthor{\bsnm{Peter}, \binits{P.}}:
\bctitle{Quantisation scale-spaces}.
In: \beditor{\bsnm{Elmoataz}, \binits{A.}},
\beditor{\bsnm{Fadili}, \binits{J.}},
\beditor{\bsnm{Qu\'eau}, \binits{Y.}},
\beditor{\bsnm{Rabin}, \binits{J.}},
\beditor{\bsnm{Simon}, \binits{L.}} (eds.)
\bbtitle{Scale Space and Variational Methods in Computer Vision}.
\bsertitle{Lecture Notes in Computer Science},
vol. \bseriesno{12679},
pp. \bfpage{15}--\blpage{26}.
\bpublisher{Springer},
\blocation{Cham}
(\byear{2021})
\end{bchapter}
\endbibitem

\bibitem[\protect\citeauthoryear{Chizhov and Weickert}{2021}]{CW21}
\begin{bchapter}
\bauthor{\bsnm{Chizhov}, \binits{V.}},
\bauthor{\bsnm{Weickert}, \binits{J.}}:
\bctitle{Efficient data optimisation for harmonic inpainting with finite
  elements}.
In: \beditor{\bsnm{Tsapatsoulis}, \binits{N.}},
\beditor{\bsnm{Panayides}, \binits{A.}},
\beditor{\bsnm{Theo}, \binits{T.}},
\beditor{\bsnm{Lanitis}, \binits{A.}},
\beditor{\bsnm{Pattichis}, \binits{C.}} (eds.)
\bbtitle{Computer Analysis of Images and Patterns. Part 2}.
\bsertitle{Lecture Notes in Computer Science},
vol. \bseriesno{13053},
pp. \bfpage{432}--\blpage{441}.
\bpublisher{Springer},
\blocation{Cham}
(\byear{2021})
\end{bchapter}
\endbibitem

\bibitem[\protect\citeauthoryear{Daropoulos et~al.}{2021}]{DAW21}
\begin{barticle}
\bauthor{\bsnm{Daropoulos}, \binits{V.}},
\bauthor{\bsnm{Augustin}, \binits{M.}},
\bauthor{\bsnm{Weickert}, \binits{J.}}:
\batitle{Sparse inpainting with smoothed particle hydrodynamics}.
\bjtitle{SIAM Journal on Applied Mathematics}
\bvolume{14}(\bissue{4}),
\bfpage{1669}--\blpage{1704}
(\byear{2021})
\end{barticle}
\endbibitem

\bibitem[\protect\citeauthoryear{Karos et~al.}{2018}]{KBPW18}
\begin{bchapter}
\bauthor{\bsnm{Karos}, \binits{L.}},
\bauthor{\bsnm{Bheed}, \binits{P.}},
\bauthor{\bsnm{Peter}, \binits{P.}},
\bauthor{\bsnm{Weickert}, \binits{J.}}:
\bctitle{Optimising data for exemplar-based inpainting}.
In: \beditor{\bsnm{Blanc-Talon}, \binits{J.}},
\beditor{\bsnm{Helbert}, \binits{D.}},
\beditor{\bsnm{Philips}, \binits{W.}},
\beditor{\bsnm{Popescu}, \binits{D.}},
\beditor{\bsnm{Scheunders}, \binits{P.}} (eds.)
\bbtitle{Advanced Concepts for Intelligent Vision Systems}.
\bsertitle{Lecture Notes in Computer Science},
vol. \bseriesno{11182},
pp. \bfpage{547}--\blpage{558}.
\bpublisher{Springer},
\blocation{Cham}
(\byear{2018})
\end{bchapter}
\endbibitem

\bibitem[\protect\citeauthoryear{Latecki et~al.}{2000}]{latecki2000}
\begin{bchapter}
\bauthor{\bsnm{Latecki}, \binits{L.}},
\bauthor{\bsnm{Lak{\"{a}}mper}, \binits{R.}},
\bauthor{\bsnm{Eckhardt}, \binits{U.}}:
\bctitle{Shape descriptors for non-rigid shapes with a single closed contour}.
In: \bbtitle{Proc.~2000 Conference on Computer Vision and Pattern Recognitio},
\bconflocation{Hilton Head, SC},
pp. \bfpage{1424}--\blpage{1429}
(\byear{2000})
\end{bchapter}
\endbibitem

\bibitem[\protect\citeauthoryear{Pfaltz and Rosenfeld}{1967}]{pfaltz1967}
\begin{barticle}
\bauthor{\bsnm{Pfaltz}, \binits{J.L.}},
\bauthor{\bsnm{Rosenfeld}, \binits{A.}}:
\batitle{Computer representation of planar regions by their skeletons}.
\bjtitle{Communications of the ACM}
\bvolume{10}(\bissue{2}),
\bfpage{119}--\blpage{122}
(\byear{1967})
\end{barticle}
\endbibitem

\bibitem[\protect\citeauthoryear{Lieutier}{2003}]{lieutier2003any}
\begin{bchapter}
\bauthor{\bsnm{Lieutier}, \binits{A.}}:
\bctitle{Any open bounded subset of $\mathbb{R}^n$ has the same homotopy type
  than its medial axis}.
In: \bbtitle{Proc.~8th {ACM} Symposium on Solid Modeling and Applications},
\bconflocation{Seattle, WA},
pp. \bfpage{65}--\blpage{75}
(\byear{2003})
\end{bchapter}
\endbibitem

\bibitem[\protect\citeauthoryear{Meijster et~al.}{2000}]{meijster2000}
\begin{bchapter}
\bauthor{\bsnm{Meijster}, \binits{A.}},
\bauthor{\bsnm{Roerdink}, \binits{J.}},
\bauthor{\bsnm{Hesselink}, \binits{W.}}:
\bctitle{A general algorithm for computing distance transforms in linear time}.
In: \beditor{\bsnm{Goutsias}, \binits{J.}},
\beditor{\bsnm{Vincent}, \binits{L.}},
\beditor{\bsnm{Bloomberg}, \binits{D.}} (eds.)
\bbtitle{Proc.~5th International Symposium on Mathematical Morphology and Its
  Applications to Image and Signal Processing}.
\bsertitle{Computational Imaging and Vision},
vol. \bseriesno{18},
pp. \bfpage{331}--\blpage{340}.
\bpublisher{Springer},
\blocation{Berlin}
(\byear{2000})
\end{bchapter}
\endbibitem

\bibitem[\protect\citeauthoryear{Gierke and Peter}{2026}]{GP26code}
\begin{botherref}
\oauthor{\bsnm{Gierke}, \binits{J.}},
\oauthor{\bsnm{Peter}, \binits{P.}}:
Code for: {Skeleton Sparsification and Densification Scale-Spaces}.
Zenodo
(2026).
\doiurl{10.5281/zenodo.18982752}
\end{botherref}
\endbibitem

\bibitem[\protect\citeauthoryear{Arbelaez et~al.}{2011}]{AMFM11}
\begin{barticle}
\bauthor{\bsnm{Arbelaez}, \binits{P.}},
\bauthor{\bsnm{Maire}, \binits{M.}},
\bauthor{\bsnm{Fowlkes}, \binits{C.}},
\bauthor{\bsnm{Malik}, \binits{J.}}:
\batitle{Contour detection and hierarchical image segmentation}.
\bjtitle{IEEE Transactions on Pattern Analysis and Machine Intelligence}
\bvolume{33}(\bissue{5}),
\bfpage{898}--\blpage{916}
(\byear{2011})
\end{barticle}
\endbibitem

\end{thebibliography}

\end{document}